\newcounter{lastnote}
\title{\bf Generalization Guarantees for Sparse Kernel Approximation with Entropic Optimal Features} 
\author
{Liang Ding, Rui Tuo, Shahin Shahrampour\\
\\
\normalsize{Texas A\&M University\vspace{.2cm}}\\
\normalsize{E-mail: {\tt \{ldingaa,ruituo,shahin\}@tamu.edu}}
}
\date{}
\newcommand\numberthis{\addtocounter{equation}{1}\tag{\theequation}}
\newcommand{\gammab}{\boldsymbol{\gamma}}
\newtheorem{theorem}{Theorem}
\newtheorem{corollary}[theorem]{Corollary}
\newtheorem{lemma}[theorem]{Lemma}
\newcommand{\Real}{\mathbb R}
\newcommand{\NatInt}{\mathbb N}
\newcommand{\CalO}{\mathcal O}
\newcommand{\CalH}{\mathcal H}
\newcommand{\BX}{\bold X}
\newcommand{\Bx}{\bold x}
\newcommand{\By}{\bold y}
\newcommand{\BZ}{\bold Z}
\newcommand{\Bz}{\bold z}
\newtheorem{condition}{Condition}
\newtheorem{Assumption}{Assumption}
\begin{document} 


\baselineskip24pt


\maketitle

\begin{abstract}
 Despite their success, kernel methods suffer from a massive computational cost in practice. In this paper, in lieu of commonly used kernel expansion with respect to $N$ inputs, we develop a novel optimal design maximizing the entropy among kernel features. This procedure results in a kernel expansion with respect to entropic optimal features (EOF), improving the data representation dramatically due to features dissimilarity. Under mild technical assumptions, our generalization bound shows that with only $\CalO(N^{\frac{1}{4}})$ features (disregarding logarithmic factors), we can achieve the optimal statistical accuracy (i.e., $\CalO(1/\sqrt{N})$). The salient feature of our design is its sparsity that significantly reduces the time and space cost. Our numerical experiments on benchmark datasets verify the superiority of EOF over the state-of-the-art in kernel approximation. 
\end{abstract}

\section{Introduction}
Kernel methods are powerful tools in describing nonlinear data models. However, despite their success in various machine learning tasks, kernel methods always suffer from scalability issues, especially when the learning task involves matrix inversion (e.g., kernel ridge regression). This is simply due to the fact that for a dataset of size $N$, the inversion step requires $\CalO(N^3)$ time cost. To tackle this problem, a great deal of research has been dedicated to the approximation of kernels using low-rank surrogates \cite{smola2000sparse,fine2001efficient,rahimi2008random}. By approximating the kernel, these methods deal with a linear problem, potentially solvable in a linear time with respect to $N$ (see e.g. \cite{joachims2006training} for linear Support Vector Machines (SVM)).

In the approximation of kernel with a finite number of features, one fundamental question is how to select the features. As an example, in supervised learning, we are interested to identify features that lead to low out-of-sample error. This question has been studied in the context of random features, which is an elegant method for kernel approximation \cite{rahimi2008random}.
Most of the works in this area improve the out-of-sample performance by modifying the stochastic oracle from which random features are sampled \cite{sinha2016learning,avron2017random,shahrampour2018data}. Nevertheless, these methods deal with dense feature matrices (due to randomness) and still require a large number of features to learn the data subspace. Decreasing the number of features directly affects the time and space costs, and to achieve that we must choose features that are as distinct as possible (to better span the space). Focusing on explicit features, we aim to achieve this goal in the current work.

\subsection{Our Contributions}
In this paper, we study low-rank kernel approximation by finding a set of mutually orthogonal features with nested and compact supports. We first theoretically characterize a condition (based on the Sturm-Liouville problem), which allows us to obtain such features. Then, we propose a novel optimal design method that maximizes the metric entropy among those features. The problem is formulated as a combinatorial optimization with a constraint on the number of features used for approximation. The optimization is generally NP-hard but yields closed-form solutions for specific numbers of features. The algorithm, dubbed entropic optimal features (EOF), can use these features for supervised learning. The construction properties of features (orthogonality, compact support, and nested support) result in a sparse approximation saving dramatically on time and space costs. We establish a generalization bound for EOF that shows with only $\CalO(N^{\frac{1}{4}})$ features (disregarding logarithmic factors), we can achieve the optimal statistical accuracy (i.e., $\CalO(1/\sqrt{N})$). Our numerical experiments on benchmark datasets verify the superiority of EOF over the state-of-the-art in kernel approximation. While we postpone the exhaustive literature review to Section \ref{sec:lit}, none of the previous works has approached the problem from the entropy maximization perspective, which is the unique distinction of the current work.

\section{Preliminaries on Kernel Methods}
Kernel methods map finite-dimensional data to a potentially infinite dimensional feature space. Any element $f$ in the reproducing kernel Hilbert space (RKHS) of $k$, denoted by $\CalH_k$, has the following representation:
\begin{equation}
\label{eq:kernel expansion}
\begin{aligned}
    f=\sum_{i=1}^\infty\langle f,g_i\rangle_k g_i,
\end{aligned}
\end{equation}
where $\langle\cdot,\cdot\rangle_{k}$ RKHS inner product induced by $k$ and $\{g_i\}$ is any  feature set (i.e., orthonormal basis) that spans the space $\CalH_k$. In general, the kernel trick relies on the observation that the inner product $\langle k(\cdot,\Bx),k(\cdot,\Bx')\rangle_k=k(\Bx,\Bx')$ with $\Bx,\Bx'\in\Real^D$ (reproducing property), so $k(\Bx,\Bx')$ is cheap to compute without the need to calculate the inner product. In this case, the feature set selected in equation \eqref{eq:kernel expansion} is $\{k(\cdot,\Bx):\Bx\in\Real^D\}$ and the target function can be written as $\sum_{i}c_ik(\cdot,\Bx_i)$. 



Under mild conditions, by the Representer Theorem, it is guaranteed that any solution of the risk minimization problem assumes the form $f(\cdot)=\sum_{i=1}^Nc_ik(\cdot,\Bx_i)$, where $N$ is the number of training data points. However, this representation introduces a massive time cost of $\CalO(N^3)$ and a memory cost of $\CalO(N^2)$ in the training. Further, the feature space $\{k(\cdot,\Bx):\Bx\in\Real^D\}$ may not cover $\CalH_k$ from an optimal sense. To be more specific, there might be another set of features $\{g_i\}_{i=1}^M$ with $M\ll N$ such that  $\{k(\cdot,\Bx):\Bx\in\BX\}\subset\{g_i\}_{i=1}^M$ where $\BX \in \Real^{N\times D}$ is the input data.

To address the aforementioned problem, \cite{rahimi2008random} propose a random approximation of $k(\Bx,\Bx')$
\begin{equation}
    \label{eq:RandomFeature}
    k(\Bx,\Bx')\approx z^\mathsf{T}(\Bx)z(\Bx')
\end{equation}
where $z(\Bx)=[\zeta_1(\Bx),\ldots,\zeta_M(\Bx)]$ is a random vector. This decomposes the feature $k(\cdot,\Bx)$ into a linear combination of random low-rank features $\{\zeta_i\}$ to approximate the original target function $\sum_{i=1}^Nc_ik(\cdot,\Bx_i)$ by $\sum_{i=1}^M\alpha_i\zeta_i$. This idea resolves the computational issue of the algorithm, but due to random selection of the features, the method does not offer the best candidate features for reconstructing the target function.



Furthermore, in supervised learning the goal is to find a mapping from inputs to outputs, and an optimal kernel approximation does not necessarily result in an optimal target function. The reason is simply that we require the features that best represent the underlying data model (or target function) rather than the kernel function. 

\section{Kernel Feature Selection}
In this paper, we propose an algorithm that uses a sparse representation to attain a high prediction accuracy with a low computational cost. The key ingredient is to find an expansion:
\begin{equation}
    \label{eq:expansion}
    f=\sum_{i=1}^\infty\langle f,g_i\rangle_kg_i
\end{equation}
such that features $\{g_i\}$ satisfy the following properties:
\begin{enumerate}
    \item Compact support: supt$[g_i]$ is compact.
    \item Nested support: supt$[g_i]=\bigcup_{j\in\text{I}}\text{supt}[g_j]$ for some finite set I.
    \item Orthogonality: $\langle g_i,g_j\rangle_k=\delta_{ij}$ where $\delta_{ij}$ denotes the Kronecker delta.
\end{enumerate}
Properties 1-2 ensure low time cost for the algorithm by promoting sparsity. To be more specific, given any finite set $\{g_i\}_{i=1}^M$ and any data point $\Bx$, $g_i(\Bx)=0$ for a large number of $g_i\in\{g_i\}_{i=1}^M$. Property 3 provides a better expansion of $\CalH_k$. 

In general, this problem may be intractable; however, we will prove later in Theorem \ref{thm:Feature_D} that when $k$ satisfies the following condition, then a feature set $\{\phi_i\}$ that satisfies properties 1-3 does exist:
\begin{condition}
\label{cond:kernel}
Let kernel $k$ be of the following product form:
$$k(\Bx,\Bx')=\prod_{d=1}^Dp(\min\{x_d,x_d'\})q(\max\{x_d,x_d'\})$$
where $p$ and $q$ are the independent solutions of the Sturm-Liouville problem on the interval $[a,b]$ for any $a,b\in[-\infty,\infty]$: 
$$\frac{d}{dx}\alpha(x)\frac{dy}{dx}+\beta(x)y=0,$$
and they satisfy the following boundary conditions:
\begin{align*}
    &c_{11}p'(a)+c_{12}p(a)=0\\
    &c_{21}q'(b)+c_{22}q(b)=0
\end{align*}
with $c_{ij}\geq 0$ for $i,j=1,2$ and the operator $\frac{d}{dx}\alpha(x)\frac{d}{dx}+\beta(x)$ is an elliptic operator that satisfies Lax-Milgram Theorem (see section 6 of \cite{evans10}).
\end{condition}
We provide  two commonly used kernels that satisfy condition \ref{cond:kernel}: 
\begin{align*}
    &k(\Bx,\Bx')=e^{-\omega\|\Bx-\Bx'\|_1}\\
    &k(\Bx,\Bx')=\prod_{d=1}^D[\omega\min\{x_d,x_d'\}+1].
\end{align*}
The first one is the Laplace kernel and the second one is the kernel associated to weighted Sobolev space \cite{DKS13}. Let $z_{l,i}=i2^{-l}$ for any $l,i\in\NatInt$. Then, when the dimension $D=1$, features associated to Laplace kernel satisfying properties 1-3 are as follows: 
\begin{equation}
\label{eq:LaplaceFeature}
    \phi_{l,i}(x)=\begin{cases}
    \frac{\sinh{\omega|x-z_{l,i+1}|}}{\sinh{\omega2^{-l}}}\quad \text{if}\ x\in(z_{l,i},z_{l,i+1}]\\
    \frac{\sinh{\omega|x-z_{l,i-1}|}}{\sinh{\omega2^{-l}}}\quad \text{if}\ x\in[z_{l,i-1},z_{l,i}]\\
    0\quad \quad \quad \quad \quad \quad  \ \  \text{otherwise}
    \end{cases}
\end{equation}
and features associated to the weighted Sobolev space kernel are as follows:
\begin{align*}
    \phi_{l,i}(x)=\max\left\{0,1-\frac{|x-z_{l,i}|}{2^{-l}}\right\}
\end{align*}
where $(l,i)$ is the index of features. We now start from 1-D kernel to construct a feature space that satisfies properties 1-3: 
\begin{theorem}
\label{thm:Feature1_D}
Suppose $k$ is a kernel that satisfies Condition \ref{cond:kernel}.
Let $\BZ_l=\{z_{l,i}=i2^{-l}: i=1,2^{l}-1\}$ and let $B_l=\{i=1,\cdots,2^l-1:i \ \text{is odd}\}$. We then define the following function on the interval $[z_{l,i-1},z_{l,i+1}]=[(i-1)2^{-l},(i+1)2^{-l}]$:
\begin{equation}
\label{eq: thm1_feature}
    \phi_{l,i}(x)=\begin{cases}
 &\frac{q(x)p_{l,i+1}-p(x)q_{l,i+1}}{q_{l,i}p_{l,i+1}-p_{l,i}q_{l,i+1}}\quad\text{if}\quad x\in(z_{l,i},\ z_{l,i+1}]\\
    &\frac{p(x)q_{l,i-1}-q(x)p_{l,i-1}}{p_{l,i}q_{l,i-1}-q_{l,i}p_{l,i-1}}\quad\text{if}\quad x\in[z_{l,i-1},\ z_{l,i}]\\
    &0\quad \quad \quad \quad \quad  \quad \quad \quad \   \text{otherwise}
    \end{cases}.
\end{equation}
where $p_{l,i}=p(z_{l,i})=p(i2^{-l})$ and $q_{l,i}=q(z_{l,i})=q(i2^{-l})$. Then the following feature set is an orthogonal basis of the RKHS of $k$, $\CalH_k$, that satisfies property 1-3 on the unit interval $[0,1]$:
$$\{\phi_{l,i}:l\in\NatInt,i\in B_l\}.$$
\end{theorem}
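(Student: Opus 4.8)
The plan is to reduce every statement to the energy (bilinear) form of the Sturm--Liouville operator and then read off orthogonality from the dyadic nesting of the grids. The conceptual foundation is the identification of the RKHS inner product with the energy form. Since $k$ is the Green's function of the elliptic operator $Ly=(\alpha y')'+\beta y$ of Condition \ref{cond:kernel}, and since Lax--Milgram guarantees that the associated bilinear form $B(f,g)=\int_0^1(\alpha f'g'-\beta fg)\,dx$ (augmented by the boundary terms dictated by the Robin conditions) is a bounded, coercive inner product, I would first establish that $\langle f,g\rangle_k=B(f,g)$ for all $f,g\in\CalH_k$. This follows by integrating by parts and using $Lk(\cdot,x')=\delta_{x'}$ to obtain the reproducing identity $B(f,k(\cdot,x'))=f(x')$, which pins down $B$ as the RKHS inner product. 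I would also note here that each $\phi_{l,i}$ is continuous, compactly supported, and assembled piecewise from $p$ and $q$, so it lies in the form domain, satisfies $L\phi_{l,i}=0$ away from its nodes, and belongs to $\CalH_k$ because the construction respects the Sturm--Liouville boundary data.

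Properties 1 and 2 are then immediate from the definition: $\supp\phi_{l,i}=[z_{l,i-1},z_{l,i+1}]$ is compact, and $[z_{l,i-1},z_{l,i+1}]=[z_{l+1,2i-2},z_{l+1,2i+2}]$ exhibits the support as a finite union of finer supports. The engine for Property 3 is Green's identity. Because $L\phi_{l,i}=0$ on each open subinterval, the only singularities of $\phi_{l,i}'$ are jumps at the three nodes $z_{l,i-1},z_{l,i},z_{l,i+1}$, so integration by parts collapses $B(\phi_{l,i},g)$ to a finite node sum $B(\phi_{l,i},g)=\sum_{z}\alpha(z)\,[\![\phi_{l,i}'(z)]\!]\,g(z)$, where $[\![\cdot]\!]$ denotes the jump of the derivative. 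Thus $\langle\phi_{l,i},g\rangle_k$ sees $g$ only through its values at the coarse nodes of $\phi_{l,i}$.

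Orthogonality now follows from an essentially combinatorial dyadic argument. Taking $g=\phi_{l',i'}$: if $l=l'$ and $i\neq i'$ (both odd, hence $|i-i'|\geq2$), the supports meet at most at one shared endpoint where $\phi_{l',i'}$ vanishes, so the node sum is $0$. If the levels differ, say $l'>l$, the decisive observation is that every coarse node $z_{l,j}=z_{l',\,j2^{l'-l}}$ carries an \emph{even} fine index; since $i'$ is odd, the only even-indexed fine nodes inside $\supp\phi_{l',i'}=[z_{l',i'-1},z_{l',i'+1}]$ are its two endpoints, where $\phi_{l',i'}=0$. Hence $\phi_{l',i'}$ vanishes at every coarse node, in particular at the three nodes of $\phi_{l,i}$, forcing $B(\phi_{l,i},\phi_{l',i'})=0$. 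The diagonal term reduces to $B(\phi_{l,i},\phi_{l,i})=\alpha(z_{l,i})\,[\![\phi_{l,i}'(z_{l,i})]\!]$ (the endpoints contribute nothing since $\phi_{l,i}$ vanishes there), which I would evaluate explicitly using Abel's identity $\alpha(pq'-p'q)\equiv C$; this produces the normalizing constant, so that dividing each feature by the square root of its energy (where needed) yields the $\delta_{ij}$ of Property 3.

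To upgrade the orthonormal system to a basis I would prove completeness. The hierarchical family $\{\phi_{l,i}:l\leq L,\ i\in B_l\}$ spans the finite-dimensional space $V_L$ of functions in $\CalH_k$ that are $L$-harmonic on each cell of the grid $\BZ_L$, and the node-localization formula shows that the $\langle\cdot,\cdot\rangle_k$-orthogonal projection $P_L$ onto $V_L$ is exactly interpolation at $\BZ_L$, so $f-P_Lf$ vanishes on $\BZ_L$. As $L\to\infty$ the grids become dense, and using continuity of RKHS functions together with the coercivity of $B$ one gets $P_Lf\to f$ in $\CalH_k$; hence $\bigcup_L V_L$ is dense and the orthonormal system is an orthogonal basis. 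I expect the main obstacle to be the first step paired with the localization formula, namely making the identification $\langle\cdot,\cdot\rangle_k=B$ and the Green's identity for the kinked features rigorous with the correct Robin boundary terms, since once the node-localization formula is in hand both the orthogonality and the density argument fall out of the dyadic index bookkeeping.
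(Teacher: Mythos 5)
Your proposal is correct, and on the orthogonality claim it travels the same road as the paper --- both identify the RKHS inner product with the Sturm--Liouville energy and exploit that each $\phi_{l,i}$ solves $\mathcal{L}u=0$ between dyadic nodes --- but your execution is more rigorous and goes strictly further. The paper's own proof writes $\langle\phi_{l,i},\phi_{n,j}\rangle_k=\int\phi_{n,j}\,\mathcal{L}\phi_{l,i}\,dx$ (with $l>n$) and declares the integrand zero because $\phi_{l,i}$ is assembled from $p$ and $q$; taken literally this ignores that $\mathcal{L}\phi_{l,i}$ is not a function but carries Dirac masses at the three kinks, and the mass at the interior peak $z_{l,i}$ multiplies the generally nonzero value $\phi_{n,j}(z_{l,i})$. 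Your node-localization formula $B(\phi_{l,i},g)=\sum_z \alpha(z)\,[\![\phi_{l,i}'(z)]\!]\,g(z)$, applied with the jumps on the \emph{coarser} feature together with the parity bookkeeping (coarse nodes have even fine index, so the finer feature vanishes there), is exactly the repair that computation needs; the same formula gives the diagonal constant via the Wronskian, recovering the paper's $\lambda_{l,i}$ and justifying the normalization the paper leaves implicit. The second difference is that you prove completeness: the paper's proof establishes pairwise orthogonality and stops, so the ``basis'' half of the statement is never argued there at all, while your projection-equals-interpolation argument on nested dyadic grids supplies it.

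One caveat on that completeness step. The identity ``orthogonal projection onto $V_L$ equals interpolation at $\BZ_L$'' requires that no boundary contributions enter the node sums, i.e., that every $f\in\CalH_k$ vanishes at $0$ and $1$. This holds when the Sturm--Liouville boundary conditions are posed at $a=0$, $b=1$ with Dirichlet data (the Brownian-bridge example, where your argument is complete). But for the Laplace kernel --- one of the paper's two running examples --- the SL problem lives on $(-\infty,\infty)$, functions in $\CalH_k$ restricted to $[0,1]$ need not vanish at the endpoints, whereas every $\phi_{l,i}$ does; the node sums for the edge features $\phi_{l,1}$ and $\phi_{l,2^l-1}$ then pick up boundary terms with nonzero weights, $f-P_Lf$ no longer vanishes at $0$ and $1$, and the closed span is only the codimension-two subspace $\{f\in\CalH_k: f(0)=f(1)=0\}$. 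This is really a defect in the theorem's statement (which the paper's proof sidesteps by never addressing completeness), but your write-up should make the hypothesis explicit: either restrict to kernels whose boundary conditions force vanishing at $0,1$, or state the density claim relative to that subspace.
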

The theorem above characterizes the set of features that satisfy Condition \ref{cond:kernel} when the input is scalar. To extend the idea to $D$-dimensional space, we only need to take the tensor product form of the 1-dimensional kernel, as described by the consequent theorem:
\begin{theorem}
\label{thm:Feature_D}
Suppose $k$ is a kernel that satisfies Condition \ref{cond:kernel}. For any $\bold{l}\in\NatInt^D$, we define the Cartesian product of sets as follows:
\begin{align*}
    &\BZ_{\bold{l}}=\times_{d=1}^D\BZ_{l_d}=\{\bold{z}_{\bold{l,i}}=(z_{l_1,i_1},\cdots,z_{l_D,i_D}):z_{l_d,i_d}\in\BZ_{l_d}\}\\
    &B_{\bold{l}}=\times_{d=1}^D=\{\bold{i}\in\NatInt^D: i_d\in B_{l_d}\}.
\end{align*}
 We then define the following function on the hypercube $\times_{d=1}^D[z_{l_d,i_d-1},z_{l_d,i_d+1}]=\times_{d=1}^D[(i_d-1)2^{-l_d},(i_d+1)2^{-l_d}]$:
 \begin{equation}
 \label{eq:thm2_feature}
     \phi_{\bold{l,i}}(\Bx)=\prod_{d=1}^D\phi_{l_d,i_d}(x_d)
 \end{equation}
 where the function $\phi_{l_d,i_d}$ is defined in Theorem \ref{thm:Feature1_D}. Then the following feature set is an orthogonal basis of the RKHS of $k$, $\CalH_k$, that satisfies property 1-3 on the unit cube $[0,1]^D$:
$$\{\phi_{\bold{l,i}}:\bold{l}\in\NatInt^D,\bold{i}\in B_\bold{l}\}.$$
\end{theorem}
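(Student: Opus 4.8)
The plan is to reduce the $D$-dimensional claim to the one-dimensional Theorem \ref{thm:Feature1_D} by exploiting the product form of the kernel. Under Condition \ref{cond:kernel}, $k$ is a product of $D$ copies of the same one-dimensional kernel $k_0(x,x')=p(\min\{x,x'\})q(\max\{x,x'\})$, so the first step is to identify its RKHS with the Hilbert tensor product $\CalH_k=\bigotimes_{d=1}^D\CalH_{k_0}$. This is the classical fact that the reproducing kernel of a tensor product of RKHSs equals the product of the factor kernels; I would verify it by equipping the algebraic tensor product of the $\CalH_{k_0}$ with the inner product determined on elementary tensors by $\langle\bigotimes_d u_d,\bigotimes_d v_d\rangle=\prod_d\langle u_d,v_d\rangle_{k_0}$, completing, and checking that $k(\cdot,\Bx)=\bigotimes_d k_0(\cdot,x_d)$ reproduces point evaluation on the resulting space.

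Second, Theorem \ref{thm:Feature1_D} supplies an orthonormal basis $\{\phi_{l,i}:l\in\NatInt,\,i\in B_l\}$ of $\CalH_{k_0}$ (property 3). The standard principle that the collection of all elementary tensors built from orthonormal bases of the factors forms an orthonormal basis of the Hilbert tensor product then gives that $\{\bigotimes_{d=1}^D\phi_{l_d,i_d}:\bold{l}\in\NatInt^D,\,\bold{i}\in B_{\bold{l}}\}$ is an orthonormal basis of $\CalH_k$. Under the identification of the previous step, the elementary tensor $\bigotimes_d\phi_{l_d,i_d}$ is exactly the function $\Bx\mapsto\prod_d\phi_{l_d,i_d}(x_d)=\phi_{\bold{l,i}}(\Bx)$ of \eqref{eq:thm2_feature}. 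Consequently property 3 is immediate, since $\langle\phi_{\bold{l,i}},\phi_{\bold{l}',\bold{i}'}\rangle_k=\prod_{d=1}^D\langle\phi_{l_d,i_d},\phi_{l_d',i_d'}\rangle_{k_0}=\prod_{d=1}^D\delta_{(l_d,i_d),(l_d',i_d')}$, and completeness (hence the basis property) is inherited from the density of finite sums of elementary tensors in $\CalH_k$.

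Third, I would read off the support properties from their one-dimensional counterparts. Property 1 is immediate since $\text{supt}[\phi_{\bold{l,i}}]=\times_{d=1}^D\text{supt}[\phi_{l_d,i_d}]=\times_{d=1}^D[(i_d-1)2^{-l_d},(i_d+1)2^{-l_d}]$ is a finite product of compact intervals, hence compact. For property 2, the one-dimensional nesting $\text{supt}[\phi_{l,i}]=\text{supt}[\phi_{l+1,2i-1}]\cup\text{supt}[\phi_{l+1,2i+1}]$ (the two refined indices being odd, hence in $B_{l+1}$) combines with the distributive identity $\times_d\bigl(\bigcup_{j_d\in I_d}A_{d,j_d}\bigr)=\bigcup_{\bold{j}\in\times_d I_d}\times_d A_{d,j_d}$ to express $\text{supt}[\phi_{\bold{l,i}}]$ as the finite union $\bigcup_{\bold{j}}\text{supt}[\phi_{\bold{l}',\bold{j}}]$ over $\bold{j}\in\times_{d=1}^D\{2i_d-1,2i_d+1\}\subset B_{\bold{l}'}$, where $l_d'=l_d+1$.

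The main obstacle is the first step: making the tensor-product identification of $\CalH_k$ rigorous, and in particular confirming that the RKHS inner product factorizes over the explicit features exactly as it does on abstract elementary tensors, and that the product family is \emph{complete} rather than merely orthonormal. Once the tensor-product structure of $\CalH_k$ is established, the remaining steps are bookkeeping dictated entirely by the product form \eqref{eq:thm2_feature} of $\phi_{\bold{l,i}}$.
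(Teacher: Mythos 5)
Your proposal is correct and takes essentially the same route as the paper, which disposes of Theorem \ref{thm:Feature_D} in one line by asserting it follows from Theorem \ref{thm:Feature1_D} because $k$ is the tensor product of the one-dimensional kernel; your identification $\CalH_k=\bigotimes_{d=1}^D\CalH_{k_0}$, the elementary-tensor basis principle, and the support bookkeeping are precisely the details the paper leaves implicit. One minor correction: Theorem \ref{thm:Feature1_D} (as proved in the supplement) gives an \emph{orthogonal}, not orthonormal, family, with $\langle\phi_{l,i},\phi_{n,j}\rangle_k=\lambda_{l,i}\delta_{(l,i),(n,j)}$, so you should normalize the one-dimensional features before invoking the orthonormal-tensor-basis fact (or phrase the conclusion as orthogonality, which is all the theorem claims); this does not affect the structure of your argument.
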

The proof of Theorem \ref{thm:Feature1_D} is given in the supplementary material. Theorem \ref{thm:Feature_D} can be derived from Theorem \ref{thm:Feature1_D}, because the kernel is simply the tensor product of  1-dimensional kernel in Theorem \ref{thm:Feature1_D}.

\begin{corollary}
\label{cor:kernel_expansion}
For any kernel $k$ satisfies condition \ref{cond:kernel} and let $\phi_{\bold{l,i}}$ be the function defined in Theorem \ref{thm:Feature_D}. Then we have the following expansion for $k$:
\begin{equation}
\label{eq:kernel_expansion}
    k(\Bx,\Bx')=\sum_{\bold{l}\in\NatInt^D}\sum_{\bold{i}\in B_{\bold{l}}}\frac{\phi_{\bold{l,i}}(\Bx)\phi_{\bold{l,i}}(\Bx')}{\langle\phi_{\bold{l,i}},\phi_{\bold{l,i}}\rangle_k}
\end{equation}
where $\langle\cdot,\cdot\rangle_k$ is the inner product induced by $k$.
\end{corollary}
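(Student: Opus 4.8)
The plan is to deduce the expansion directly from Theorem~\ref{thm:Feature_D} together with the reproducing property, rather than manipulating the explicit formulas for $\phi_{\bold{l,i}}$. Since Theorem~\ref{thm:Feature_D} asserts that $\{\phi_{\bold{l,i}}:\bold{l}\in\NatInt^D,\bold{i}\in B_{\bold{l}}\}$ is an orthogonal basis of $\CalH_k$, the first step is to normalize it: set
$$g_{\bold{l,i}}=\frac{\phi_{\bold{l,i}}}{\sqrt{\langle\phi_{\bold{l,i}},\phi_{\bold{l,i}}\rangle_k}},$$
so that $\{g_{\bold{l,i}}\}$ is an \emph{orthonormal} basis and the general expansion~\eqref{eq:kernel expansion} applies with $\{g_i\}=\{g_{\bold{l,i}}\}$.

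Next I would apply~\eqref{eq:kernel expansion} to the specific element $f=k(\cdot,\Bx')$, which lies in $\CalH_k$ by the reproducing property. This gives, in the RKHS norm,
$$k(\cdot,\Bx')=\sum_{\bold{l},\bold{i}}\langle k(\cdot,\Bx'),g_{\bold{l,i}}\rangle_k\, g_{\bold{l,i}}.$$
The reproducing property $\langle f,k(\cdot,\Bx')\rangle_k=f(\Bx')$ collapses each coefficient to $\langle k(\cdot,\Bx'),g_{\bold{l,i}}\rangle_k=g_{\bold{l,i}}(\Bx')$, so the expansion reads $k(\cdot,\Bx')=\sum_{\bold{l},\bold{i}}g_{\bold{l,i}}(\Bx')\,g_{\bold{l,i}}$.

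The final step is to evaluate this identity at $\Bx$. Here I would invoke continuity of the evaluation functional $f\mapsto f(\Bx)=\langle f,k(\cdot,\Bx)\rangle_k$, which is bounded on any RKHS. Applying it termwise and using $\langle g_{\bold{l,i}},k(\cdot,\Bx)\rangle_k=g_{\bold{l,i}}(\Bx)$ yields
$$k(\Bx,\Bx')=\sum_{\bold{l},\bold{i}}g_{\bold{l,i}}(\Bx)\,g_{\bold{l,i}}(\Bx')=\sum_{\bold{l}\in\NatInt^D}\sum_{\bold{i}\in B_{\bold{l}}}\frac{\phi_{\bold{l,i}}(\Bx)\phi_{\bold{l,i}}(\Bx')}{\langle\phi_{\bold{l,i}},\phi_{\bold{l,i}}\rangle_k},$$
which is precisely~\eqref{eq:kernel_expansion} after substituting back the definition of $g_{\bold{l,i}}$.

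The only genuinely delicate point is the passage from norm convergence to pointwise convergence: the expansion~\eqref{eq:kernel expansion} converges in $\|\cdot\|_k$, whereas the claim is a pointwise identity in $\Bx,\Bx'$. I expect this to be the main (though standard) obstacle, and I would handle it exactly as above -- the boundedness of the evaluation functional in an RKHS lets me exchange the infinite sum with the inner product against $k(\cdot,\Bx)$, since the partial sums converge in norm and the inner product is continuous. Because an orthonormal basis expansion is unconditionally convergent, reindexing the sum as a double sum over $\bold{l}$ and $\bold{i}$ is harmless. No estimate on the explicit $\phi_{\bold{l,i}}$ is needed; the result is a Mercer-type expansion valid for any orthonormal basis of $\CalH_k$.
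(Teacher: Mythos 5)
Your proposal is correct and follows essentially the same route as the paper: the paper's proof also substitutes $f=k(\Bx,\cdot)$ into the orthonormal expansion~\eqref{eq:expansion} and invokes the reproducing property. You simply make explicit two details the paper leaves implicit — the normalization of the orthogonal basis (which produces the $\langle\phi_{\bold{l,i}},\phi_{\bold{l,i}}\rangle_k$ denominators) and the passage from norm convergence to pointwise convergence via boundedness of evaluation functionals — both of which are handled correctly.
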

\begin{proof}
We only need to substitute $f(\cdot)$ in equation \eqref{eq:expansion} by $k(\Bx,\cdot)$, then according to the reproducing property of $k$ we can have the result.
\end{proof}
Corollary \ref{cor:kernel_expansion} is the direct result of Theorem \ref{thm:Feature_D}. So we can have the following sparse approximation for the value $k(\Bx,\Bx')$:
$$k(\Bx,\Bx')\approx\Bz^\mathsf{T}(\Bx)\Bz(\Bx')$$
where $$\Bz(\Bx)=\left[\frac{\phi_{\bold{l,i}}(\Bx)}{||\phi_{\bold{l,i}}||_k}\right]_{(\bold{l,i})\in S}$$ for some set $S$. We will show in section 4.1 that most entries on $z(\bold{x})$ are zero. Form this perspective, the expansion \eqref{eq:kernel_expansion} is analogous to the random feature \eqref{eq:RandomFeature} except that the above $z(\Bx)$ is nonrandom. 

We now use the RKHS of the following kernel on $[0,1]$ as an example:  $$k(x,x')=\min\{x,x'\}[1-\max\{x,x'\}].$$
 The RKHS associated to $k$ is the first order Sobolev space with zero boundary conditions:
$$\CalH_k=\left\{f:\int_{0}^1[f'(s)]^2ds<\infty, f(0)=f(1)=0\right\}.$$
In this example, the feature functions given by Theorem \ref{thm:Feature1_D} coincide with a wavelet basis in $\CalH_k$.
Consider the mother wavelet given by the triangular function:
$$\phi(d)=\max\{0,1-|d|\}.$$
Then for any $l\in\NatInt$, $i=1,\cdots,2^l-1$, direct calculations show that
\begin{equation}
    \label{eq:wavelet}
     \phi_{l,i}(x)=\phi\left(\frac{x-i2^{-l}}{2^{-l}}\right).
\end{equation}
Now it is easy to verify that the features
$\{\phi_{l,i}:l\in\NatInt,i\ \text{is odd}\}$
satisfy the desired properties 1-3:
\begin{enumerate}
    \item supt$[\phi_{l,i}]=[(i-1)2^{-l},(i+1)2^{-l}]$.
    \item supt$[\phi_{l,i}]=\text{supt}[\phi_{l+1,2i-1}]\cup\text{supt}[\phi_{l+1,2i+1}].$
    \item $\int_{0}^1\phi_{l,i}'\phi_{n,j}'ds=2^{l+1}\delta_{(l,i),(n,j)}.$
\end{enumerate}
Figure \ref{fig:waveletExpansion} illustrates the compact and nested supports of these wavelet features.
\begin{figure}
\centering
\caption{Top two panels: $\bold{W}_2=\{\phi_{l,i}:l=2\}$ and $\bold{W}_3=\{\phi_{l,i}:l=3\}$; lower two panels: nested structure for the representation  of a function $f\in\CalH_k$.}
\label{fig:waveletExpansion}
$\begin{array}{cc}
\includegraphics[width=0.23\textwidth]{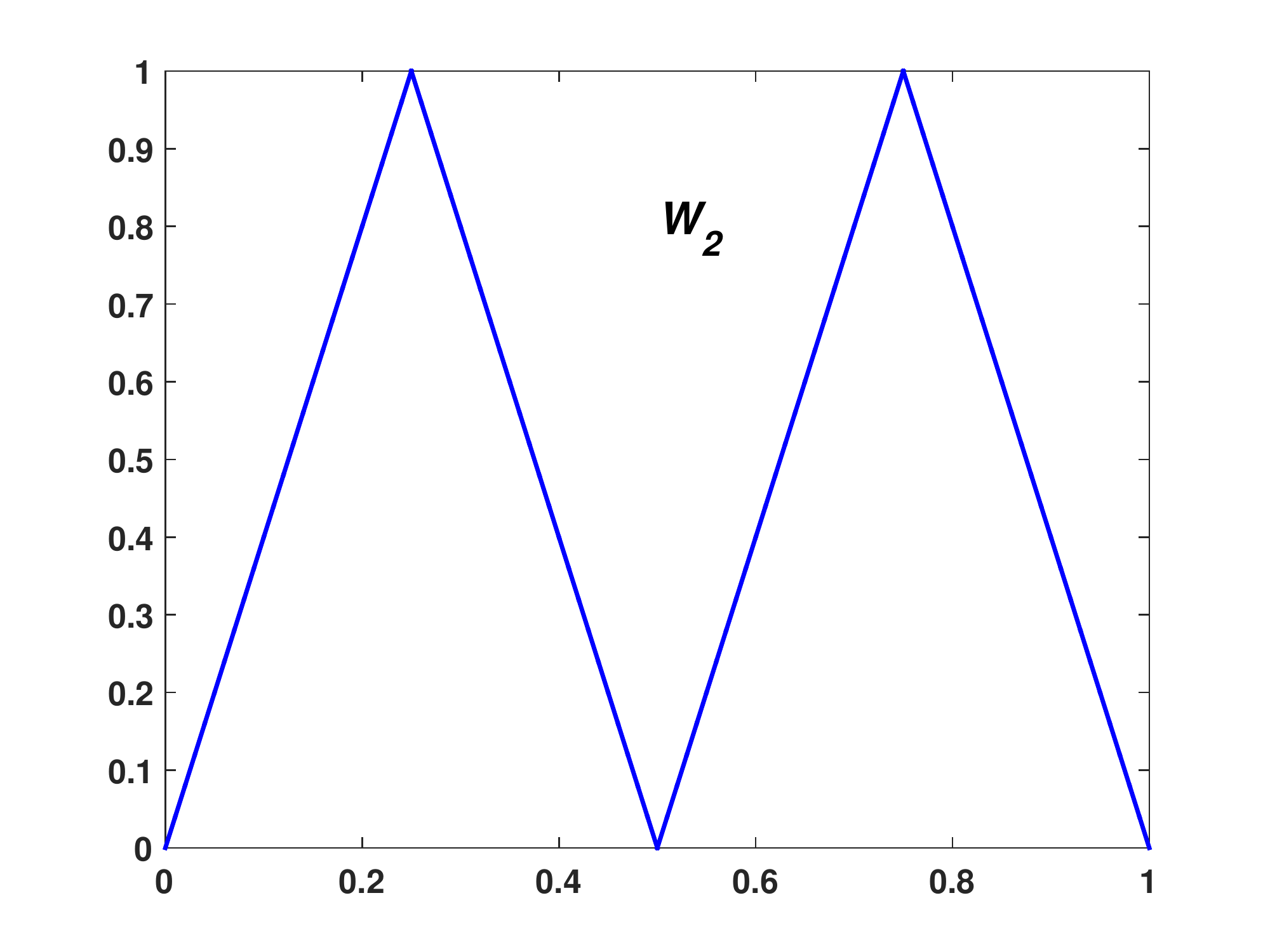}  &
\includegraphics[width=0.23\textwidth]{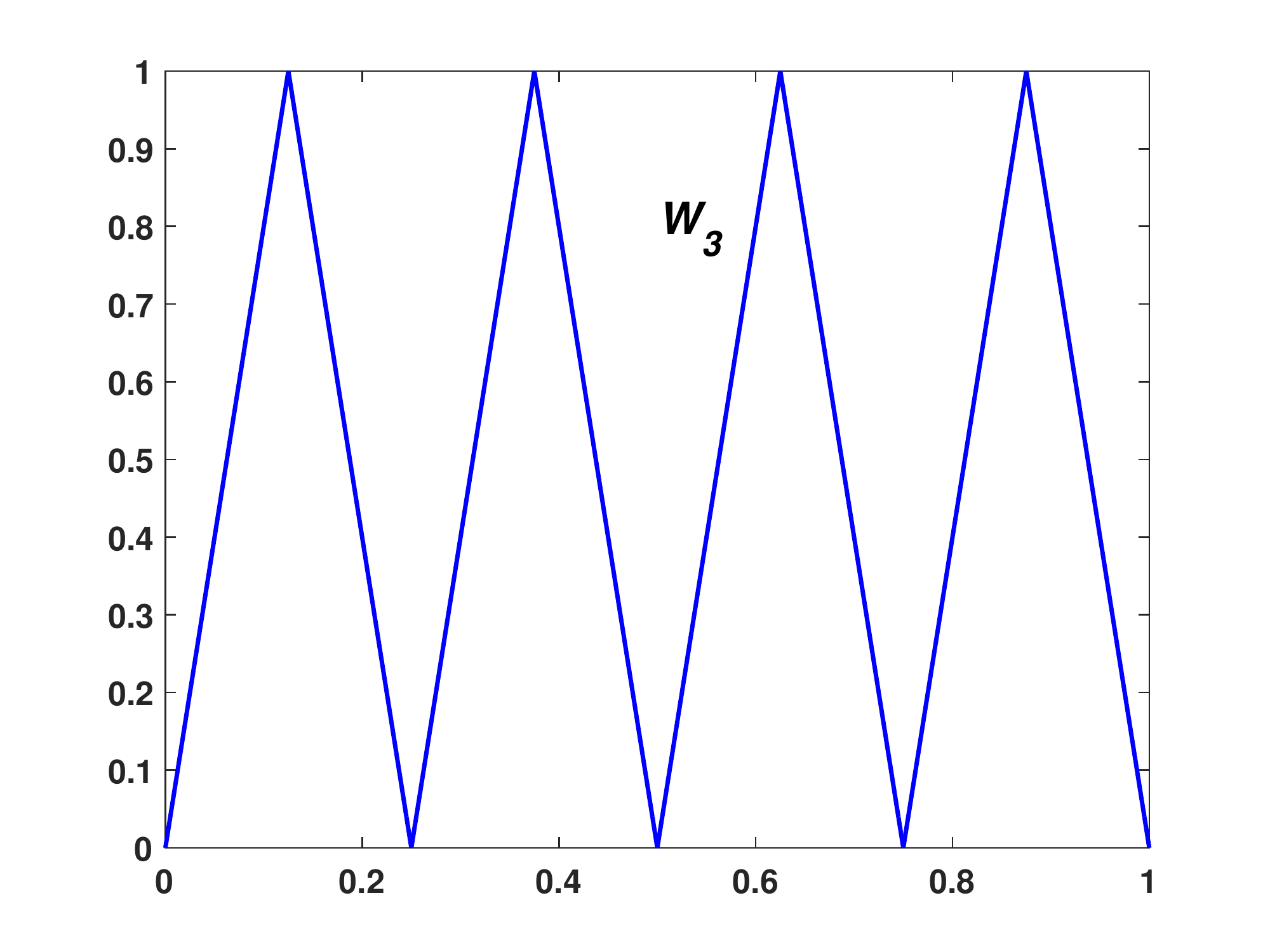} \\ 
\includegraphics[width=0.23\textwidth]{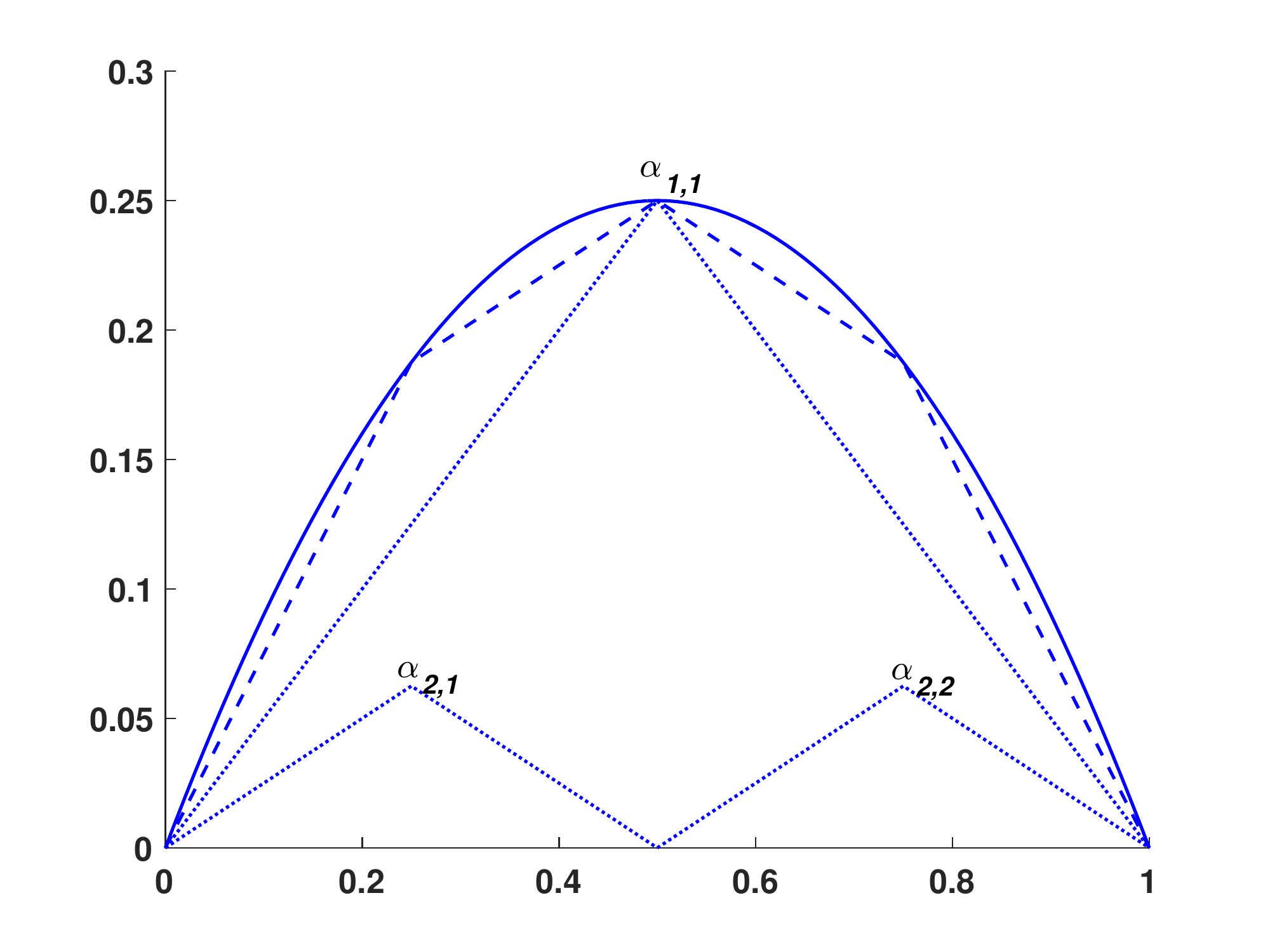} & 
\includegraphics[width=0.23\textwidth]{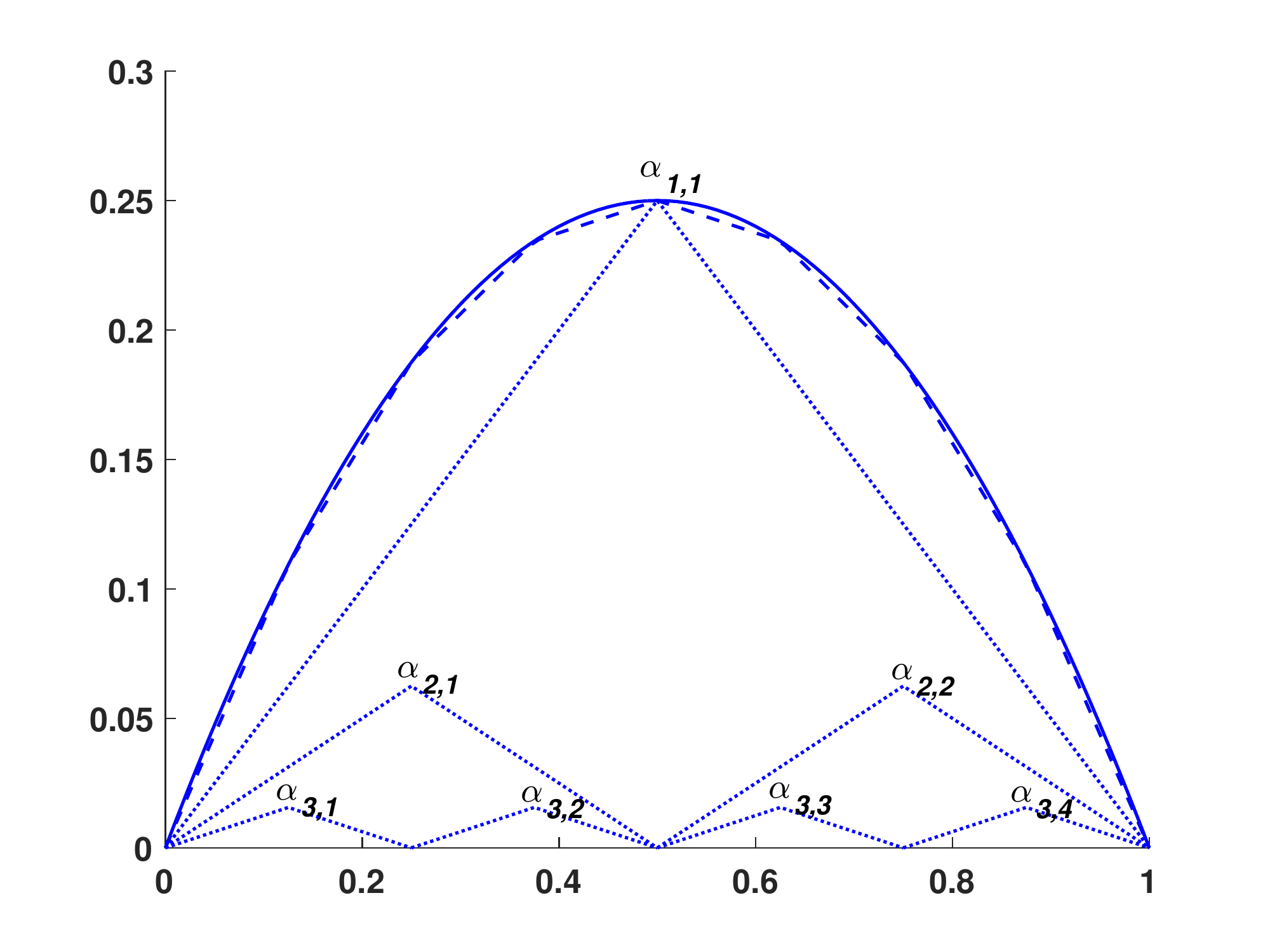}
\end{array}$
\end{figure}
The compact support properties can lead to a significant improvement in time cost. Consider the evaluation of $f(x)=\sum_{|l|\leq n} \alpha_{l,i} \phi_{l,i}(x)$. The compact support property implies that $\phi_{l,i}(x)=0$ for most $(l,i)$'s, so that the computational cost of evaluating $f(x)$ can be much lower than the total number of features. In Section \ref{Sec:alg}, we will leverage this property of the basis functions to propose an efficient algorithm for learning. This goal cannot be achieve when the basis functions are not compactly supported, such as the random features.

\begin{figure}
\centering
\caption{2-D tensor product of wavelet features with compact support $\phi_{[1,2],[1 1]}$ and $\phi_{[1,2],[1 3]}$ } 
\label{fig:tensor}
\includegraphics[width=0.53\textwidth]{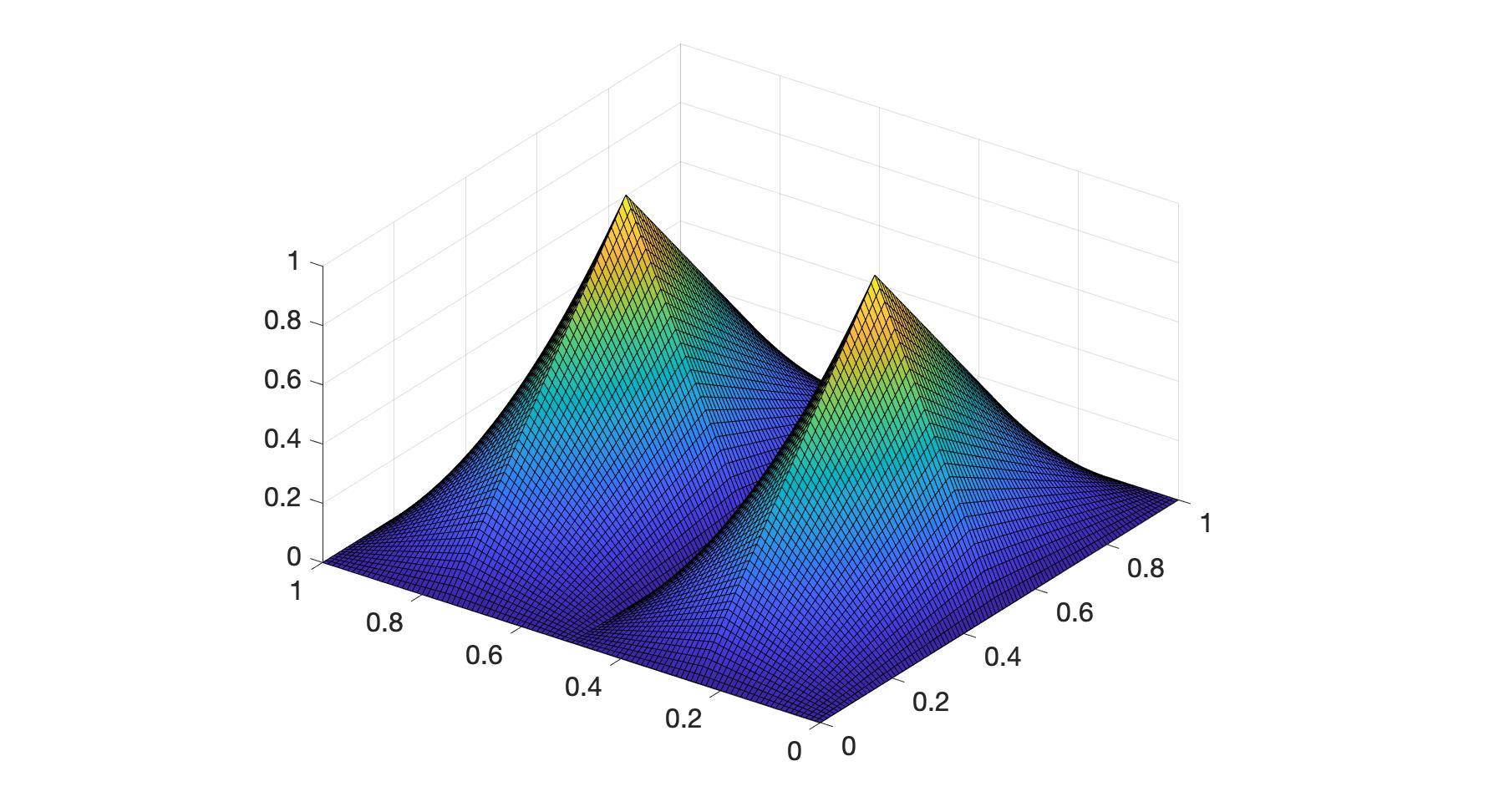}  
\end{figure}
Figure \ref{fig:tensor} shows the example of the tensor product of the wavelet feature defined in \eqref{eq:wavelet}. It is a 2-dimensional extension of the wavelet feature and according to Theorem \ref{thm:Feature_D}, the features satisfy properties 1-3 in the RKHS induced by the following kernel: 
$$k(\Bx,\Bx')=\prod_{d=1}^D\min\{x_d,x_d'\}[1-\max\{x_d,x_d'\}],$$
which is the mixed Sobolev space of first order with zero boundary condition on $[0,1]^D$.
 We refer the reader to \cite{Bungartz04} for more details on mixed order Sobolev space.

In view of Theorem \ref{thm:Feature_D}, we can now lift a data point from $\Bx\in\Real^D$ to a finite dimensional space spanned by features with compact and nested supports. As a result, the evaluation of $\Bx$ on a large number of features is zero, yielding a sparse and efficient representation. 


\section{Entropic Optimal Design}
In the previous section, we provide conditions under which we can find features with compact and nested supports. We now present an optimization criterion to select the best finite set of features with the maximum metric entropy. The intuition behind this choice is that we favor a set of features that are different from each other as much as possible, so that we can reconstruct the underlying model by a moderate amount of features.

To formulate the optimization problem, we need to introduce some notation. First we introduce the covering number of an operator between two Banach spaces. Let $\varepsilon>0$ and $\mathsf{A},\mathsf{B}$ be Banach spaces with unit balls $B_\mathsf{A}$ and $B_\mathsf{B}$, respectively. The covering number of an operator $T:\mathsf{A}\to\mathsf{B}$ is defined as
\begin{align*}
    \mathcal{N}(T,\varepsilon)=
    \inf_{n\in\NatInt}\left\{n: \exists\{b_i\in\mathsf{B}\}_{i=1}^n \ \text{s.t.}\ T(B_{\mathsf{A}})\subseteq\bigcup\limits_{i=1}^{n}(b_i+\varepsilon B_\mathsf{B})\right\}.
\end{align*}
The metric entropy of $T$ is then defined as  $\mathsf{Ent}[T,\varepsilon]:=\log \mathcal{N}(T,\varepsilon)$. Now, let $\CalH_k$ be the RKHS associated to kernel $k$ with the inner product $\langle\cdot,\cdot\rangle_k$, and let $\mathcal{P}_S$ be the projection operator from $\CalH_k$ to the following finite dimensional subspace
$$\mathcal{F}_S=\{\phi_{\bold{l,i}}: (\bold{l,i})\in S\},$$
where $\phi_{\bold{l,i}}$ is defined in Theorem \ref{thm:Feature_D} and $\text{dim}(\mathcal{P}_S)=|S|$. Our goal is to find the optimal set $S^*$ (with cardinality at most $M$), whose corresponding feature set maximizes the entropy. This is equivalent to solving the following optimization problem:
\begin{align*}
    & \sup_{S} \mathsf{Ent}[\mathcal{P}_S,\varepsilon]\\
    &\text{s.t.} \quad |S|\leq M. \numberthis \label{eq:opt}
\end{align*}
Following the lines in the proof of Theorem \ref{thm:Feature_D} (in the supplementary), we can show that the features in $\mathcal{F}_S$ are mutually orthogonal with Hilbert norm:
\begin{equation}
    \label{eq:coefficient}
    ||\phi_{\bold{l,i}}||^2_{\CalH_k}=:C^{-1}_{\bold{l,i}},
\end{equation}
where $C_{\bold{l,i}}\to\infty$ as $|\bold{l}|\to \infty$ (see lemma 1 in Supplementary Material). We first multiply $\phi_{\bold{l,i}}$ by $C_{\bold{l,i}}^{\frac{1}{2}}$ to normalize the feature. For any function $f\in\CalH_k$, we then have
$$\mathcal{P}_Sf=\sum_{(\bold{l,i})\in S}C_{\bold{l,i}}\langle f,\phi_{\bold{l,i}}\rangle_k\phi_{\bold{l,i}}.$$
As a result, the entropic optimization problem \eqref{eq:opt} is equivalent to searching an $M$-dimensional Euclidean space with the largest unit ball, which can be characterized as follows
\begin{align*}
    &\max_S\sum_{({\bold{l},\bold{i}})\in S}C_{\bold{l,i}}\\
    &\quad \text{s.t.} |S|\leq M.
\end{align*}
This optimization problem is called the Knapsack problem and, in general, is NP-hard \cite{KnapsackProblem}. However, for some specific values of $M$, closed form solutions exist. Consider the Laplace kernel here as an example. For Laplace kernel $k(\Bx,\Bx')=e^{-\omega\|\Bx-\By\|_1}$, from direct calculation, the constant is:
$$C_{\bold{l,i}}=\prod_{d=1}^D\sinh(\omega2^{-l_d}).$$
In this case, $C_{\bold{l}}=C_{\bold{l,i}}$ is independent of $\bold{i}$ and for any $|\bold{l}|<|\bold{l}'|$, the value $C_\bold{l}>C_\bold{l'}$. Therefore, we can derive that when $M=|\{\bold{l}:|\bold{l}|<n\}|$ for some $n$, the optimal set $S^*_n$ is
\begin{equation}
    \label{eq:optSet}
    S^*_n=\{({\bold{l},\bold{i}}):|\bold{l}|\leq n, \bold{i}\in B_{\bold{l}}\}
\end{equation}
because for any $C_{\bold{l}}\in S_n^*$ and any $C_{\bold{l}'}\not\in S_n^*$, $C_{\bold{l}}>C_{\bold{l}'}$.
It turns out the set $S^*_n$ is equivalent to the Sparse Grid design \cite{Bungartz04}.

\subsection{Algorithm: Entropic Optimal Features}\label{Sec:alg}
With the aforementioned theorems, we can now describe the algorithm to compute the regression function associated to  a kernel that satisfies Condition \ref{cond:kernel}. Suppose the set $S^*_n$ given by equation \eqref{eq:optSet} is the index set associated to the feature functions that maximizes the entropy optimization problem \eqref{eq:opt}. So given a specific input $\Bx$, we aim to compute the vector
$$z(\Bx)=[C_{\bold{l,i}}\phi_{\bold{l,i}}(\Bx)]_{(\bold{l,i})\in S^*_n}=: [z_{\bold{l,i}}(\Bx)]_{(\bold{l,i})\in S^*_n}$$
where $C_{\bold{l,i}}$ is the coeffecient defined in \eqref{eq:coefficient}, $z(\Bx)$ is the approximation that satisfies
$$k(\Bx,\Bx')\approx z(\Bx)^\mathsf{T}z(\Bx')$$
in Corollary \ref{cor:kernel_expansion} with $\phi_{\bold{l,i}}$  the feature function defined in equation \eqref{eq:thm2_feature}. We call $z(\Bx)$ the entropic optimal feature (EOF).

According to properties 1-3, the supports of $\{\phi_{\bold{l,i}}: (\bold{l,i})\in S^*_n\}$ are either disjoint or nested. Therefore, only a small amount of entries on $z(\Bx)$ are non-zero. To be more specific, given any $\bold{l}\in\NatInt^D$ and input $\Bx$, the supports of $\{\phi_{\bold{l,i}}: \bold{i}\in B_{\bold{l}}\}$ are disjoint so we can immediately compute the unique non-zero  entry $z_{\bold{l,i}}(\Bx)$. Algorithm \ref{alg:SPG} shows how to explicitly compute the EOF $z(\Bx)$ at a data point $\Bx$. Note that $\lceil\cdot \rceil$,$\lfloor\cdot \rfloor$ denote the ceiling and floor operations, respectively.
\begin{algorithm}
   \caption{Entropic Optimal Features (EOF)}
   \label{alg:SPG}
\begin{algorithmic}
   \STATE {\bfseries Input:} point $\Bx$, $S^*_n$
   \STATE Initialize $z(\Bx)=[z_{\bold{l,i}}(\Bx)]_{(\bold{l,i})\in S^*_n}=0$
   \WHILE{$|\bold{l}|\leq n+D-1$}
   \FOR{$d=1$ {\bfseries to} $D$}
   \STATE \begin{equation*}
       i_d=\begin{cases}
        \lceil\frac{x_d}{2^{-l_d}}\rceil\ \text{if} \ \lceil\frac{x_d}{2^{-l_d}}\rceil\ \ \text{is odd}\\
        \lfloor\frac{x_d}{2^{-l_d}}\rfloor\ \text{if} \ \lfloor\frac{x_d}{2^{-l_d}}\rfloor\ \ \text{is odd}
       \end{cases}
   \end{equation*}
   \ENDFOR
   \STATE  $z_{\bold{l},\bold{i}}(\Bx)=C_{\bold{l,i}}\phi_{\bold{l},\bold{i}}(\Bx)$
   \ENDWHILE
\end{algorithmic}
\end{algorithm}\\
 The dimension of the vector $z(\Bx)$ given $n$ levels is $\CalO(2^nn^{D-1})$ \cite{Bungartz04}. The number of non-zero elements for $z(\Bx)$ after running Algorithm \ref{alg:SPG} is: 
\begin{align*}
    \sum_{|\bold{l}|\leq n+D-1}1&=\sum_{i=D}^{n+D-1}\sum_{|\bold{l}|=i}1\\
    &=\sum_{i=D}^{n+D-1} {i-1\choose D-1}\\
    &={n+D-1\choose D}=\CalO(n^D),
\end{align*}
which means fraction of non-zeros to the whole vector in $z(\Bx)$ grows with $\CalO(\frac{n}{2^n})$ as a function of level $n$. 

{\bf Time Complexity of EOF in Regression:} Based on above, if we fix $M$ as the size of $z(\Bx)$, the number of non-zero entries on $z(\Bx)$ is $\CalO(\log^D M)$. Since we evaluate $z(\Bx)$ for each training data, the feature matrix has $\CalO(n\log^D M)$ non-zero elements, resulting in a training cost of $\CalO(N\log^{2D}M)$, which is smaller than $\CalO(NM^2)$ of random features \cite{rahimi2008random}, especially when $D$ is moderate.

\section{Generalization Bound}
In this section, we present the generalization bound for EOF when it is used in supervised learning. Let us define the approximated target function as 
$$\hat{f}:=\operatorname*{argmin}_{f\in \mathcal{F}_M}\frac{1}{N}\sum_{j=1}^N L(y_i,f(\Bx_i))+\lambda \|f\|^2_k, $$
given independent and identically distributed samples $\{(\Bx_i,y_i)\}_{i=1}^N$,
where $\mathcal{F}_M$ denotes the space spanned by the first $M$ EOFs; $L$ is a loss function; and $\lambda$ is a tuning parameter that may depend on $n$. We denote by $R(f):=\mathbb{E}_{\Bx,y}[L(y,f(\Bx))]$ the true risk. The goal is to bound the generalization error $R(\hat{f})-\inf_{f\in \mathcal{H}_k}R(f)$.

We use the following assumptions to establish the bound:
\begin{Assumption}\label{A:1}
    There exists $f_0\in \mathcal{H}_k$ so that $\inf_{f\in\mathcal{H}_k} R(f)=R(f_0)$.
\end{Assumption}
\begin{Assumption}\label{A:2}
The function $m_y(\cdot):=L(y,\cdot)$ is twice differentiable for all $y$. Furthermore, $m_y(\cdot)$ is strongly convex. 
\end{Assumption}
\begin{Assumption}\label{A:3}
    The density function of input $\Bx$ is uniformly bounded away from infinity. The outputs are uniformly bounded. 
\end{Assumption}
Assumption \ref{A:1} allows infimum to be achieved in the RKHS. This is not ensured automatically since we deal with a potentially infinite-dimensional RKHS $\mathcal{H}_k$, that is possibly universal (see Remark 2 of \cite{rudi2016generalization}).
Assumption \ref{A:2} is true for common loss functions including least squares for regression ($m_y(y')=(y-y')^2$) and logistic regression for classification ($m_y(y')=\log[1+\exp(-yy')]$). The bounded output constraint of Assumption \ref{A:3} is also common in supervised learning.  

The generalization bound is given by the following theorem.

\begin{theorem}\label{th:gb}
Suppose Assumptions \ref{A:1}-\ref{A:3} are fulfilled. If the tuning parameter is choosing to have $\lambda\sim N^{-1/2}$, then
$$R(\hat{f})-\inf_{f}R(f)\leq \CalO_p(N^{-1/2})+C M^{-2}\log^{4D-4}M, $$
for some $C>0$. The constants may depend on $\|f_0\|_k$.
\end{theorem}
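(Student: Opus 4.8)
The plan is to split the excess risk $R(\hat f)-R(f_0)$ into a statistical part that does not depend on the number of features $M$ and a deterministic approximation part that does not depend on the sample size $N$, exploiting the orthogonal expansion of Corollary~\ref{cor:kernel_expansion} together with the strong convexity and smoothness supplied by Assumption~\ref{A:2}. As an analysis device (well defined by the representer theorem) I would introduce the regularized population minimizer $f_\lambda:=\argmin_{f\in\CalH_k}R(f)+\lambda\|f\|_k^2$ and the regularized empirical minimizer over the \emph{full} RKHS, $\hat f_{\CalH}:=\argmin_{f\in\CalH_k}\frac1N\sum_j L(y_j,f(\Bx_j))+\lambda\|f\|_k^2$, and then write
$$R(\hat f)-R(f_0)=\big[R(\hat f)-R(\hat f_{\CalH})\big]+\big[R(\hat f_{\CalH})-R(f_0)\big].$$
The second bracket is the ordinary generalization error of kernel ridge-type regression over $\CalH_k$ and will carry the $\CalO_p(N^{-1/2})$ term; the first bracket is the price of restricting the optimization to the $M$-dimensional sparse-grid subspace $\mathcal F_M$ and will carry the $M^{-2}\log^{4D-4}M$ term.

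For the statistical bracket I would first record the two-sided quadratic equivalence $R(f)-R(f_0)\asymp\|f-f_0\|_{L^2(\rho_X)}^2$ valid on the bounded region where the iterates live: the lower bound comes from strong convexity of $m_y$ (Assumption~\ref{A:2}) and the fact that $\nabla R(f_0)=0$ (Assumption~\ref{A:1}), and the matching upper bound comes from twice-differentiability together with bounded outputs and bounded density (Assumption~\ref{A:3}). I would then decompose $R(\hat f_{\CalH})-R(f_0)$ into the regularization bias $R(f_\lambda)-R(f_0)\le\lambda\|f_0\|_k^2$ and the estimation variance $R(\hat f_{\CalH})-R(f_\lambda)$. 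With $\lambda\sim N^{-1/2}$ the bias is $\CalO(N^{-1/2})$, while the variance is handled by a localized empirical-process argument in which the strong-convexity-induced variance bound $\mathrm{Var}\big[L(y,f(\Bx))-L(y,f_0(\Bx))\big]\lesssim R(f)-R(f_0)$ drives a Bernstein/fixed-point estimate, yielding rate $N^{-1/2}$ rather than a dimension-dependent $\sqrt{M/N}$. This is the step that genuinely needs strong convexity and the $\lambda\sim N^{-1/2}$ calibration, and I expect it to be the main obstacle: the crux is showing that the estimation term is free of $M$.

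For the approximation bracket the key point is to measure everything in $L^2$, not in RKHS norm, so that the modulus supplied by strong convexity of $m_y$ is a constant independent of $\lambda$. Since $\hat f$ minimizes the same objective over $\mathcal F_M$ as $\hat f_{\CalH}$ does over $\CalH_k$, comparing $\hat f$ against the best $L^2$ approximant of $\hat f_{\CalH}$ inside $\mathcal F_M$ and using constant-modulus strong convexity plus bounded smoothness gives $\|\hat f-\hat f_{\CalH}\|_{L^2}\lesssim\inf_{g\in\mathcal F_M}\|g-\hat f_{\CalH}\|_{L^2}$; combined with the quadratic equivalence this controls $R(\hat f)-R(\hat f_{\CalH})$. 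Because $\hat f_{\CalH}\in\CalH_k$, the first-order mixed Sobolev space $H^1_{\mathrm{mix}}$, with norm bounded in terms of $\|f_0\|_k$ through the ridge penalty, I would invoke the sparse-grid approximation theory of \cite{Bungartz04}: at level $n$ the subspace $\mathcal F_M$ uses $M\sim 2^{n}n^{D-1}$ features and attains $L^2$ error $\CalO(2^{-n}n^{D-1})=\CalO(M^{-1}\log^{2(D-1)}M)$ for $H^1_{\mathrm{mix}}$ targets. Squaring produces exactly $\CalO(M^{-2}\log^{4D-4}M)$, which is where the exponent $4D-4=4(D-1)$ comes from.

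Finally I would collect the two estimates. Beyond the main obstacle, the remaining care is bookkeeping: tracking $n\sim\log_2 M$ and the $n^{D-1}$ factor that appears once in the point count and once in the error, transferring the $L^2$ approximation from the empirical to the population measure via the density bound of Assumption~\ref{A:3}, and checking that the constants depending on $\|f_0\|_k$ and on the Hessian bounds of $m_y$ stay uniform over the bounded region in which $\hat f$ and $\hat f_{\CalH}$ reside.
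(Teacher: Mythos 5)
Your architecture is genuinely different from the paper's: the paper never introduces the full-RKHS empirical minimizer $\hat f_{\CalH}$ or the population regularized minimizer $f_\lambda$. Instead it compares $\hat f$ directly against $f_M:=\argmin_{f\in\mathcal{F}_M}\|f_0-f\|_k$ through a four-term split $R(\hat f)-R(f_0)=I_1+I_2+I_3+I_4$ (two empirical-process terms, the basic-inequality term $I_2\leq\lambda\|f_0\|_k^2$, and the approximation term $I_4$ bounded by its Lemma 1), handling $I_1,I_3$ by showing the relevant loss classes are Donsker. Your sparse-grid accounting ($M\sim 2^n n^{D-1}$, $L^2$ error $2^{-n}n^{D-1}$, hence $M^{-2}\log^{4D-4}M$ after squaring) matches the paper's Lemma 1 exactly. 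The genuine gap is at the step you treat as routine: the claim that $\|\hat f_{\CalH}\|_k$ is ``bounded in terms of $\|f_0\|_k$ through the ridge penalty.'' The naive penalty argument gives only $\lambda\|\hat f_{\CalH}\|_k^2\leq \frac{1}{N}\sum_{i=1}^N L\bigl(y_i,f_0(\Bx_i)\bigr)+\lambda\|f_0\|_k^2$, i.e.\ $\|\hat f_{\CalH}\|_k=O_p(\lambda^{-1/2})=O_p(N^{1/4})$, since the empirical risk at $f_0$ does not vanish. Both halves of your argument need the much stronger $\|\hat f_{\CalH}\|_k=O_p(1)$: the statistical bracket needs $\hat f_{\CalH}$ to lie in a fixed Donsker ball, and the approximation bracket needs $\|\hat f_{\CalH}\|_{H^1_{\mathrm{mix}}}=O_p(1)$, without which your approximation term inflates to $O_p(N^{1/2}M^{-2}\log^{4D-4}M)$ and the theorem's feature count collapses. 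Obtaining $O_p(1)$ is exactly the hardest step of the paper's proof: it centers the basic inequality at $f_0$, uses strong convexity together with the first-order optimality $\E[m_y'(f_0(\Bx))\,h(\Bx)]=0$, and bounds the resulting mean-zero linear empirical process by $(\|\hat f\|_k+1)\,O_p(N^{-1/2})$, which yields $\lambda\|\hat f\|_k^2\leq \|\hat f\|_k\,O_p(N^{-1/2})+O_p(N^{-1/2})$ and hence $\|\hat f\|_k=O_p(1)$. You would have to replicate that entire argument for $\hat f_{\CalH}$; nothing in your outline does so.

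A second, smaller hole: your approximation bracket leaks between norms. Strong convexity of the empirical objective gives contraction in the \emph{empirical} $L^2$ norm $\frac{1}{N}\sum_{i=1}^N(\cdot)^2(\Bx_i)$ (plus a $\lambda\|\cdot\|_k^2$ term), whereas the sparse-grid bound and the quadratic equivalence $R(f)-R(f_0)\asymp\|f-f_0\|_{L^2(\rho_X)}^2$ are population statements. The density bound of Assumption \ref{A:3} relates $L^2(\rho_X)$ to Lebesgue $L^2$; it does \emph{not} convert empirical norms into population norms. You need either an extra uniform-convergence step (Donsker property of the squared-difference class over a fixed RKHS ball, contributing an absorbable $O_p(N^{-1/2})$) or an $L^\infty$ sparse-grid bound, which makes the transfer trivial. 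Finally, $R(\hat f)-R(\hat f_{\CalH})$ is not controlled by $\|\hat f-\hat f_{\CalH}\|_{L^2}^2$ alone: expanding around $f_0$ produces the cross term $2\langle \hat f-\hat f_{\CalH},\,\hat f_{\CalH}-f_0\rangle_{L^2}$, which must be absorbed by Young's inequality into the two squared terms. These last two issues are patchable with the paper's Lemma \ref{lemma:2}/Lemma \ref{lemma:Donsker} machinery, but the missing $O_p(1)$ norm bound is the crux, and your plan as stated does not supply it.
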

The theorem above shows that with $\CalO(N^{\frac{1}{4}})$ EOFs, the optimal statistical accuracy $\CalO(1/\sqrt{N})$ is achieved up to logarithmic factors. Compared to random features for kernel approximation, this result improves the generalization bound. For random features, the number of required features to achieve the optimal rate is $\CalO(\sqrt{N})$ in the case of ridge regression \cite{rudi2016generalization}.

\section{Related Literature}\label{sec:lit}
We provide related works for kernel approximation from different perspectives:

{\bf Random Features (Randomized Kernel Approximation):} Randomized features was introduced as an elegant approach for Monte Carlo approximation of shift-invariant kernels \cite{rahimi2008random}, and it was later extended for Quasi Monte Carlo approximation \cite{yang2014quasi}. Several methods consider improving the time cost of random features, decreasing it by a linear factor of the input dimension (see e.g., Fast-food \cite{le2013fastfood,yang2015carte}). Quadrature-based random features are also shown to boost kernel approximation \cite{munkhoeva2018quadrature}. The generalization properties of random features have been studied for $\ell_1$-regularized risk minimization \cite{yen2014sparse} and ridge regression \cite{rudi2016generalization}, improving the initial generalization bound of \cite{rahimi2009weighted}. \cite{felix2016orthogonal} develop orthogonal random features (ORF) to boost the variance of kernel approximation. ORF is shown to provide optimal kernel estimator in terms of mean-squared error \cite{choromanski2018geometry}. A number of recent works have considered data-dependent sampling of random features to improve kernel approximation. Examples consist of \cite{yu2015compact} on compact nonlinear feature maps, \cite{yang2015carte,oliva2016bayesian} on approximation of shift-invariant/translation-invariant kernels, and \cite{agrawal2019data} on data-dependent approximation using greedy approaches (e.g., Frank-Wolfe). Furthermore, data-dependent sampling has been used to improve generalization in supervised learning \cite{sinha2016learning,shahrampour2018data} through target kernel alignment.

{\bf Deterministic Kernel Approximation:} The studies on finding low-rank surrogates for kernels date back two decades \cite{smola2000sparse,fine2001efficient}. As an example, the celebrated Nystr{\"o}m method \cite{williams2001using,drineas2005nystrom} samples a subset of training data for approximating a low-rank kernel matrix. The Nystr{\"o}m method has been further improved in \cite{zhang2008improved} and more recently used for approximation of indefinite kernels \cite{oglic2019scalable}. Explicit feature maps have also proved to provide efficient kernel approximation. The works of \cite{yang2004efficient,xu2006explicit,cotter2011explicit} have proposed low-dimensional Taylor expansions of Gaussian kernel for improving the time cost of learning. \cite{vedaldi2012efficient} further study explicit feature maps for additive homogeneous kernels.

{\bf Sparse Approximation Using Greedy Methods:} Sparse approximation literature has mostly focused on greedy methods. \cite{vincent2002kernel} have developed a matching pursuit algorithm where kernels are the dictionary elements. The work of \cite{nair2002some} focuses on sparse regression and classification models using Mercer kernels, and \cite{sindhwani2011non} considers sparse regression with multiple kernels. Classical matching pursuit was developed for regression, but further extensions to logistic regression \cite{lozano2011group} and smooth loss functions \cite{locatello2017unified} have also been studied. \cite{oglic2016greedy} propose a greedy reconstruction technique for regression by empirically fitting squared error residuals. \cite{shahrampour2018learning} also use greedy methods for sparse approximation using multiple kernels. 

{\it Our approach is radically different from the prior work in the sense that we characterize a set of features that maximize the entropy. Our feature construction and entropy optimization techniques are novel and have not been explored in the kernel approximation literature.}

\section{Numerical Experiments}
\textbf{Benchmark Algorithm:} We now compare \textbf{EOF} with the following random-feature benchmark algorithms on several datasets from the UCI Machine Learning Repository:

\textbf{1) RKS} \cite{rahimi2009weighted} with approximated Laplace kernel feature $z(\Bx)= \frac{1}{\sqrt{M}}[\cos(\Bx^\mathsf{T}\gammab_m + b_m)]_{m=1}^M$, where $\{\gammab_m\}^M_{m=1}$ are sampled from a Cauchy distribution multiplied by $\sigma$, and $\{b_m\}_{m=1}^M$ are sampled from the uniform distribution on $[0,2\pi]$.\\
     \textbf{2) ORF} \cite{felix2016orthogonal} with approximated Gaussian kernel feature $z(\Bx)= \frac{1}{\sqrt{M}}[\cos(\Bx^\mathsf{T}\gammab_m + b_m)]_{m=1}^M$, with $[\gammab_1\  \gammab_2\cdots\gammab_m]=\sigma \bold{S}\bold{Q}$ where $\bold{S}$ is a diagonal matrix, with diagonal entries sampled i.i.d. from the $\chi$-distribution with $d$ degrees and $\bold{Q}$ is the orthogonal matrix obtained from the QR decomposition of a matrix $\bold{G}$ with normally distributed entries. Note that ORF approximates a Gaussian kernel.\\
     \textbf{3) LKRF} \cite{sinha2016learning} with approximated Laplace kernel feature $z(\Bx)= \frac{1}{\sqrt{M}}[\cos(\Bx^\mathsf{T}\gammab_m + b_m)]_{m=1}^M$, with first
a larger number $M_0$ random features are sampled and then re-weighted by solving a kernel alignment
optimization. The top $M$ random features would be used in the training.\\
      \textbf{4) EERF} \cite{shahrampour2018data}, with approximated Laplace kernel feature $z(\Bx)= \frac{1}{\sqrt{M}}[\cos(\Bx^\mathsf{T}\gammab_m + b_m)]_{m=1}^M,$  where first
a larger number $M_0$ random features are sampled  and then re-weighted according to a score function. The top $M$ random features would appear in the training.
   
\textbf{Experiment Setup:} We also use approximated Laplace kernel feature $z(\Bx)=[\phi_{\bold{l,i}}(\Bx)]_{(\bold{l,i})\in S^*_n}$ where $\phi_{\bold{l,i}}=\prod_{d=1}^D\phi_{l_d,i_d}$ with $\phi_{l_d,i_d}$ defined as equation \eqref{eq:LaplaceFeature}.  To determine the value of $\sigma$ used in \textbf{RKS}, \textbf{EERF}, \textbf{LKRF} and  \textbf{ORF} we choose the value of $\sigma^{-1}$ for each dataset to be the mean distance of
the $50^{\text{th}}$ $\ell_2$ nearest neighbor \cite{felix2016orthogonal}. We then calculate the corresponding $\omega$ for $\textbf{EOF}$ associated to $\sigma$.  
The number of features in \textbf{EOF} is a function of dimension $D$ and level $n$, so it is not possible to calculate them for any $M$. To resolve this issue, for any given $M$, we select the set $S^*_n$ defined in \eqref{eq:optSet} that satisfies
\begin{equation*}
 \big|S^*_{n-1}\big|<M\leq \big|S^{*}_{n}\big|  
\end{equation*}
and randomly select $M$ pairs of $(\bold{l,i})\in S^*_n$ to have a random set $S_M$. We then use the following  approximated feature:
$$z_M(\bold{x}):=[\phi_{\bold{l,i}}(\Bx)]_{(\bold{l,i})\in S_M}.$$ This is equivalent to randomly select $M$ rows from the feature $z(\Bx)=[\phi_{\bold{l,i}}(\Bx)]_{(\bold{l,i})\in S^*_n}$.

We let $M_0=10M$ for \textbf{LKRF} and \textbf{EERF}, then for any $M$, we compare the performance of different algorithms.

\textbf{Datasets:} In Table \ref{sample-table}, we report the number of training samples $N_{\text{train}}$ and test samples $N_\text{test}$ used
for each dataset. For the MNIST data set, we  map the original $784-$dimensional data to a $32-$dimensional space using an auto-encoder. If the training and test samples are not provided separately for a dataset, we split it
randomly. We standardize the data as follows: we scale each input to the unit interval $[0,1]$  and the responses in regression to be inside $[-1, 1]$.
\begin{table}
\caption{Input dimension, number of training samples, and number of test samples are denoted by $D$, $N_\text{train}$, and
$N_\text{test}$, respectively}
\label{sample-table}
\vskip 0.15in
\begin{center}
\begin{small}
\begin{sc}

{\scriptsize
\scalebox{1.24}{
\begin{tabular}{lcccc}
\toprule
Data set &Task &D& $N_\text{train}$ & $N_\text{test}$ \\
\midrule
MNIST &Classification & 32& 20000& 10000 \\
Electrical Grids Stability & Classification   & 13& 7000& 3000 \\
Superconductivity & Regression& 81&15000&6263 \\
Energy Efficiency & Regression&8&512&256\\
\bottomrule
\end{tabular}
}}
\end{sc}
\end{small}
\end{center}
\vskip -0.1in
\end{table}

\begin{figure*}[t!]
\caption{Comparison of the test error of EOF (this work) versus benchmark algorithms inclduing RKS, EERF, LKRF and ORF.}
\label{fig:performance}
\includegraphics[width=1\textwidth]{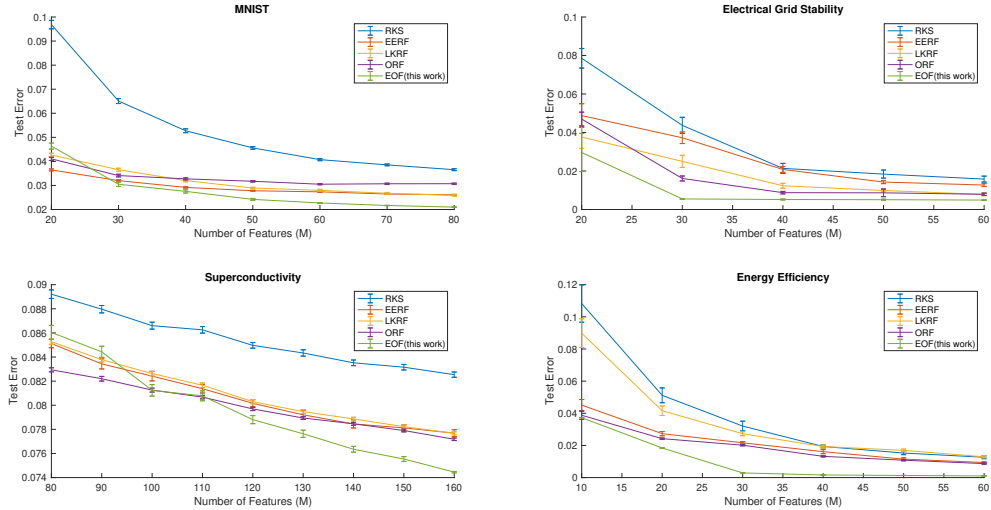}  
\end{figure*}

\textbf{Comparison:} For a fixed number of features, we perform 50 simulation runs for each algorithm on each data set. We then report the average test error (with standard errors) in
Fig. \ref{fig:performance} where the plot line is the mean error of an algorithm and the error bar reflects the standard deviation of the error. Throughout our experiments, we can see that \textbf{EOF} consistently improves the test error compared
to other randomized-feature algorithms. 
This is specifically visible when the gap between $S_M$ and $S_n^*$ becomes very small and, due to the optimality of $S^*_n$, \textbf{EOF} outperforms any random feature algorithm.

\begin{table*}[t!]
\centering
\caption{Time and Space Complexity Comparison}
\label{tab:Time&Space}
\begin{tabular}{ |p{1cm}||p{0.6cm}|p{0.6cm}|p{0.6cm}|p{1.3cm}|  }
 \hline
 \multicolumn{5}{|c|}{MNIST} \\
 \hline
 Method & $M$& $M_0$& $T_{\rm train}$ & nnz($F$) \\
 \hline
 RKS   &    80  &  &  1.64 &$1.6\times10^6$ \\
 EERF&  80   &  800  &4.43 &$1.6\times10^6$ \\
 LKRF & 80  & 800 & 3.07 &$1.6\times10^6$ \\
 ORF&  80   &    &1.21 &$1.6\times10^6$ \\
 EOF & 80 &   2048& 2.45 & $2.5\times10^5$\\
 \hline
 \hline
 \multicolumn{5}{|c|}{Superconductivity} \\
 \hline
Method & $M$& $M_0$& $T_{\rm train}$ & nnz($F$) \\ \hline
 RKS   &    160  & & 0.10  &$2.4\times 10^6$ \\
 EERF&  160   &  1600 &0.45 &$2.4\times 10^6$ \\
 LKRF & 160  & 1600 & 0.37 &$2.4\times 10^6$ \\
 ORF & 160  &  & 0.13 &$2.4\times 10^6$ \\
 EOF & 160 &   161& 0.14 & $1.2\times 10^6$\\
 \hline
 \end{tabular}
 \begin{tabular}{ |p{1cm}||p{0.6cm}|p{0.6cm}|p{0.6cm}|p{1.3cm}|  }
 \hline
 \multicolumn{5}{|c|}{Electrical Grids Stability} \\
 \hline
 Method & $M$& $M_0$& $T_{\rm train}$ & nnz($F$) \\
 \hline
 RKS   &    60  &  & 0.04  &$4.2\times10^5$ \\
 EERF&  60   &  600  &0.14 &$4.2\times10^5$ \\
 LKRF & 60  & 600 & 0.13 &$4.2\times10^5$ \\
 ORF & 60  &  & 0.06 &$4.2\times10^5$ \\
 EOF & 60 &    338& 0.08 & $1.3\times10^5$\\
 \hline
 \hline
 \multicolumn{5}{|c|}{Energy Efficiency} \\
 \hline
 Method & $M$& $M_0$& $T_{\rm train}$ & nnz($F$) \\
 \hline
 RKS   &    60  &  & 0.01  &$6.1\times 10^3$ \\
 EERF&  60   &  600 &0.05 &$6.1\times 10^3$ \\
 LKRF & 60  & 600 & 0.06 &$6.1\times 10^3$ \\
 ORF & 60  &  & 0.02 &$6.1\times 10^3$ \\
 EOF & 60 &   128& 0.03 & $1.0\times 10^3$\\
 \hline
\end{tabular}
\end{table*}

In Table \ref{tab:Time&Space}, we also compare the time complexity and space complexity. We define the feature matrix
$$F:=[z(\Bx_i)]_{i=1}^N,$$
which is an $M\times N$ matrix with $M$ the number of features and $N$ the number of data. Due to the sparse structure of $\textbf{EOF}$, we can also see that the number of non-zero entries of the $F$ associated to \textbf{EOF} is smaller than other methods. When both the dimension $D$ and the size of data $N$ are large, the sparsity of \textbf{EOF} becomes more obvious as shown in the case of MNIST. The time cost of running \textbf{EOF} is also quite impressive. It is consistently better than \textbf{EERF} and \textbf{LKRF} and comparable and slightly slower than \textbf{RKS}. In fact, the major time for \textbf{EOF} is spent on feature matrix construction. For random features, due to high efficiency of matrix operations in Matlab, feature construction is fast. However, for \textbf{EOF} the feature construction via matrix operations is not possible in an efficient way. We observed that after the feature matrix construction, \textbf{EOF} is the fastest method in training. For example, if we only count the training time (excluding feature construction) as the time cost, in kernel ridge regression on the dataset Superconductivity, the comparison between \textbf{RKS} and \textbf{EOF} is as follows:

\begin{table}[h!]
\centering
\caption{Comparison on RKS and EOF in pure training excluding feature construction.}
\scalebox{0.8}{
\begin{tabular}{l c c c c c} 
 \hline
    &$M=80$ &$M=100$&$M=120$&$M=140$&  $M=160$ \\ [0.5ex] 
 \hline
\textbf{RKS}& $2\times 10^{-3}$  &$3\times 10^{-3}$  &$4\times 10^{-3}$& $5\times 10^{-3}$ & $6\times 10^{-3}$  \\
\textbf{EOF}& $2\times 10^{-3}$ &$2\times 10^{-3}$   &$2\times 10^{-3}$ & $2\times 10^{-3}$& $2\times 10^{-3}$ \\ [1ex] 
 \hline
\end{tabular}
}
\label{table:1}
\end{table}

The run time is obtained on a MacPro with a 4-core, 3.3 GHz Intel Core i5 CPU  and 8 GB of RAM (2133Mhz).

\section{Conclusion}
We provide a method to construct a set of mutually orthogonal features (with nested and small supports) and select the best $M$ of them that maximize the entropy of the associated projector. The nested and compact support of feature functions greatly reduces the time and space cost for feature matrix operations. The orthogonality and entropic optimality reduces dramatically the error of approximation. We have provided generalization error bound which indicates that only $\CalO(N^{\frac{1}{4}})$ features are needed to achieve the $\CalO(N^{-\frac{1}{2}})$ optimal accuracy. Future directions include generalizing this method to a broader class of kernels.


\appendix
\section*{Supplementary Material}
\section{Proof of Theorem \ref{thm:Feature1_D}}
The kernel function:
$$k(x,y)=p(\min\{x,y\})q(\max\{x,y\})$$
is in fact the Green's function of the Sturm-Liouville operator \cite{ODEHandbook}
$$\mathcal{L}:=\frac{d}{dx}\alpha(x)\frac{d}{dx}+\beta(x)$$
So the inner product product induced by $k$ is
$$\langle f,g\rangle_k=\int_{0}^1f\mathcal{L}gdx$$
For any $l\in\NatInt$ and $i\neq j$, the supports of $\phi_{l,i}$ and $\phi_{l,j}$ are $[(i-1)2^{-l},(i+1)2^{-l}]$ and $[(j-1)2^{-l},(j+1)2^{-l}]$respectively. This two supports are disjoint because both $i$ and $j$ are odd so $\langle \phi_{l,i},\phi_{l,j}\rangle_k=0$ if $i\neq j$.
For any $l,n\in\NatInt$ and any $i,j$, the supports supt$[\phi_{l,i}]$ and supt$[\phi_{n,j}]$ are either disjoint or nested. If they are disjoint, then $\langle\phi_{n,j} ,\phi_{n,j}\rangle_k=0$. If they are nested, , without loss of generality assume $l>n$ and $i\leq j2^{l-n}$, then because both $p$ and $q$ satisfy:
$$\mathcal{L}p=\mathcal{L}q=0$$
so
\begin{align*}
    &\quad\langle\phi_{l,i},\phi_{n,j}\rangle_k\\ &=\int_{(i-1)2^{-l}}^{(i+1)2^{-l}}\phi_{n,j}\mathcal{L}\phi_{l,i}dx\\
    &=\int_{(i-1)2^{-l}}^{(i+1)2^{-l}}\phi_{n,j}\mathcal{L}\frac{p(x)q_{l,i-1}-q(x)p_{i,l-1}}{p_{l,i}q_{l,i-1}-q_{l,i}p_{l,i-1}}dx\\
    &=0.
\end{align*}
As a result, we have 
$$ \langle\phi_{l,i},\phi_{n,j}\rangle_k=\lambda_{l,i}\delta_{(l,i),(n,j)} $$
where $\lambda_{l,i}$ is a function of $l$ and $i$.

\section{Proof of Theorem \ref{th:gb}}

We need the following lemmas.

\begin{lemma}\label{lemma:1}
Denote $f_M=\operatorname*{argmin}_{f\in\mathcal{F}_M} \|f_0-f\|_k$. Then we have 
$$R(f_M)-R(f_0)\leq C M^{-2}\log^{4D-4} M \|f_0\|^2_k, $$
for some constant $C$.
\end{lemma}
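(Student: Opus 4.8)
The plan is to convert the excess risk $R(f_M)-R(f_0)$ into a squared $L^2$ approximation error and then bound the latter using the hierarchical (sparse-grid) structure of the EOF basis. First I would exploit Assumption \ref{A:1}: since $f_0$ minimizes $R$ over all of $\mathcal{H}_k$, the first-order (Gateaux) stationarity condition $\mathbb{E}[\partial_2 L(y,f_0(\mathbf{x}))\,h(\mathbf{x})]=0$ holds for every direction $h\in\mathcal{H}_k$, in particular for $h=f_M-f_0$. A second-order Taylor expansion of $R$ about $f_0$ then annihilates the linear term and leaves
\begin{equation*}
R(f_M)-R(f_0)=\tfrac12\,\mathbb{E}\big[\partial_2^2 L(y,\tilde f(\mathbf{x}))\,(f_M-f_0)^2(\mathbf{x})\big]\leq \tfrac{B}{2}\,\|f_M-f_0\|_{L^2(\rho)}^2,
\end{equation*}
where $B=\sup \partial_2^2 L<\infty$ is finite because $L$ is twice differentiable (Assumption \ref{A:2}) and its arguments stay in a bounded set: the outputs are bounded (Assumption \ref{A:3}) and the function values obey $|f(\mathbf{x})|\leq \|f\|_k\sqrt{k(\mathbf{x},\mathbf{x})}$ by the reproducing property, with $k(\mathbf{x},\mathbf{x})$ bounded on $[0,1]^D$. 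Finally the density bound in Assumption \ref{A:3} gives $\|f_M-f_0\|_{L^2(\rho)}^2\leq \|\rho\|_\infty\|f_M-f_0\|_{L^2([0,1]^D)}^2$, so it suffices to control the plain $L^2$ error.

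For the approximation step, I would use that $f_M=\operatorname*{argmin}_{f\in\mathcal{F}_M}\|f_0-f\|_k$ is exactly the $\mathcal{H}_k$-orthogonal projection of $f_0$ onto the span $\mathcal{F}_M$ of the selected sparse-grid features. Choosing the largest $n$ with $|S^*_n|\leq M$ (so that $n\sim\log M$ and $M\sim 2^n n^{D-1}$ by the dimension count in Section \ref{Sec:alg}), the residual is dominated by the truncation discarding all levels $|\mathbf{l}|>n$: writing $f_0=\sum_{\mathbf{l}}f_{\mathbf{l}}$ with $f_{\mathbf{l}}$ the level-$\mathbf{l}$ component, we have $f_0-f_M=\sum_{|\mathbf{l}|>n}f_{\mathbf{l}}$. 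Because the EOF features are mutually orthogonal in $\mathcal{H}_k$ (Property 3, Theorem \ref{thm:Feature_D}), the level components are $k$-orthogonal, hence $\|f_{\mathbf{l}}\|_k\leq\|f_0\|_k$ for every $\mathbf{l}$. The crux is the per-level norm scaling: from $\|\phi_{\mathbf{l},\mathbf{i}}\|_{L^2}^2\asymp 2^{-|\mathbf{l}|}$ and $\|\phi_{\mathbf{l},\mathbf{i}}\|_k^2\asymp 2^{|\mathbf{l}|}$ (the tensorized version of the $1$-D scalings $\int\phi_{l,i}^2\asymp 2^{-l}$, $\int(\phi_{l,i}')^2=2^{l+1}$ from the proof of Theorem \ref{thm:Feature1_D}), together with the disjoint supports of $\{\phi_{\mathbf{l},\mathbf{i}}\}_{\mathbf{i}}$ at a fixed level, one obtains $\|f_{\mathbf{l}}\|_{L^2}\lesssim 2^{-|\mathbf{l}|}\|f_{\mathbf{l}}\|_k\leq 2^{-|\mathbf{l}|}\|f_0\|_k$.

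Combining these via the triangle inequality and the level count $\#\{\mathbf{l}\in\NatInt^D:|\mathbf{l}|=m\}=\binom{m-1}{D-1}$ yields
\begin{equation*}
\|f_0-f_M\|_{L^2}\leq\sum_{|\mathbf{l}|>n}\|f_{\mathbf{l}}\|_{L^2}\lesssim \|f_0\|_k\sum_{m>n}\binom{m-1}{D-1}2^{-m}\lesssim \|f_0\|_k\, n^{D-1}2^{-n}.
\end{equation*}
Squaring and substituting $2^{-n}\sim n^{D-1}/M$ and $n\sim\log M$ gives $\|f_0-f_M\|_{L^2}^2\lesssim M^{-2}n^{4D-4}\|f_0\|_k^2\sim M^{-2}\log^{4D-4}M\,\|f_0\|_k^2$, which, chained with the first paragraph, is the claimed bound. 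I expect the main obstacle to be the first reduction: making the Taylor/stationarity argument rigorous requires verifying that the intermediate point $\tilde f$ keeps the loss's second argument in the bounded regime where $\partial_2^2 L$ is uniformly controlled, and that the first-order condition is legitimately invoked in the infinite-dimensional RKHS, which is precisely why Assumption \ref{A:1} (an attained minimizer, not merely an infimum) is needed. The level-counting and norm-scaling are routine once the $1$-D scalings are in hand; the only mild subtlety there is that the crude estimate $\|f_{\mathbf{l}}\|_k\le\|f_0\|_k$ (in place of a sharper strengthened Cauchy--Schwarz) is what produces the stated $\log^{4D-4}M$ exponent.
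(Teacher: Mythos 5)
Your proof is correct, and it follows the paper's overall skeleton---second-order Taylor reduction to $\|f_M-f_0\|_{L^2}^2$, per-level bounds, a triangle inequality over levels $|\mathbf{l}|>n$, and the count $M\asymp 2^n n^{D-1}$---but your central estimate is obtained by a genuinely different argument, and the comparison is instructive. The paper proves the per-level bound $\lesssim 2^{-|\mathbf{l}|}\|f_0\|_k$ analytically: it writes each coefficient $\int f_0\,\mathcal{L}\phi_{\mathbf{l},\mathbf{i}}$ as a tensorized second-difference operator applied to $f_0$ (a Green's-function identity), controls the weights via Taylor expansion of $p,q$ and the Wronskian being bounded away from zero, and converts the resulting sum of squared differences into $\|\prod_{d}\partial_{x_d}f_0\|_{L^2}\sim\|f_0\|_k$ through a Riemann-sum argument and the Sturm--Liouville energy estimate. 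You bypass all of that with a purely Hilbert-space argument: disjoint supports within a level plus the scalings $\|\phi_{\mathbf{l},\mathbf{i}}\|_{L^2}^2\asymp 2^{-|\mathbf{l}|}$ and $\|\phi_{\mathbf{l},\mathbf{i}}\|_k^2\asymp 2^{|\mathbf{l}|}$ give $\|f_{\mathbf{l}}\|_{L^2}\asymp 2^{-|\mathbf{l}|}\|f_{\mathbf{l}}\|_k$, and Parseval gives $\|f_{\mathbf{l}}\|_k\leq\|f_0\|_k$. This is cleaner and correctly locates the slack: your closing remark is accurate, since level-wise Cauchy--Schwarz, $\sum_{|\mathbf{l}|>n}2^{-|\mathbf{l}|}\|f_{\mathbf{l}}\|_k\leq\bigl(\sum_{|\mathbf{l}|>n}4^{-|\mathbf{l}|}\bigr)^{1/2}\|f_0\|_k$, would lower the power to $\log^{3D-3}M$, and the paper's route incurs the same loss. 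The one caveat is that your two norm scalings are exact only in the triangular-wavelet example you cite; for a general kernel under Condition~\ref{cond:kernel} they are precisely what the paper's Taylor/Wronskian computations establish ($\|\phi_{\mathbf{l},\mathbf{i}}\|_k^2=\prod_d\alpha_{l_d,i_d}\sim 2^{|\mathbf{l}|}$ and $\|\phi_{\mathbf{l},\mathbf{i}}\|_{L^2}^2\sim(2/3)^D2^{-|\mathbf{l}|}$), so that machinery remains a required input to your shortcut rather than something it eliminates. Two points where your write-up is actually tighter than the paper's: you justify killing the linear Taylor term via stationarity of $f_0$ and interchange of derivative and expectation (the paper writes the quadratic identity with no justification), and you handle general $M$ by projecting onto a set containing $S^*_n$ (the paper silently assumes $M=|S^*_n|$ for some $n$).
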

\begin{proof}
According to Assumption \ref{A:2}, we can see that 
$$R(f_M)-R(f_0)=\mathbb{E}[m''_y(\bold{u}^*)(f_M(x)-f_0(x))^2]$$
In view of Assumption \ref{A:3}, we only need to prove
$$\|f_M-f_0\|_{L^2}^2=C M^{-2}\log^{4D-4} M \|f_0\|_k^2$$
for any $f_0\in\CalH_k$ we then can finish the proof. Let $M=|\{(\bold{l,i}):|\bold{l}|\leq n,\bold{i}\in B_{\bold{l}}\}|$. According to theorem \ref{thm:Feature_D}, we have the following expansion:
\begin{align*}
    &\quad \|f_M-f_0\|_{L^2}\\
    &=\|\sum_{|\bold{l}|>n}\sum_{\bold{i}\in B_{\bold{l}}}\langle f_0,\frac{\phi_{\bold{l,i}}}{\|\phi_{\bold{l,i}}\|_k}\rangle_k\frac{\phi_{\bold{l,i}}(\cdot)}{\|\phi_{\bold{l,i}}\|_k}\|_{L^2}\\
    &= \|\sum_{|\bold{l}|>n}\sum_{\bold{i}\in B_{\bold{i}}}\int_{\bold{S}_{\bold{l,i}}}f_0(\bold{s})\mathcal{L}\phi_{\bold{l,i}}(\bold{s})d\bold{s}\frac{\phi_{\bold{l,i}}(\cdot)}{\|\phi_{\bold{l,i}}\|_k^2}\|_{L^2}.
\end{align*}
where $\bold{S}_{\bold{l,i}}$ is the support of $\phi_{\bold{l.i}}$. We let
$$v(\cdot)_{\bold{l}}:=\sum_{\bold{i}\in B_{\bold{i}}}\int_{\bold{S}_{\bold{l,i}}}f_0(\bold{s})\mathcal{L}\phi_{\bold{l,i}}(\bold{s})d\bold{s}\frac{\phi_{\bold{l,i}}(\cdot)}{\|\phi_{\bold{l,i}}\|_k^2}.$$
Our first goal is to estimate $v_{\bold{l}}$. From theorem 2 of \cite{ding2018scalable} or direct calculation based on the property of Green's function, we can see that for any $f\in\CalH_k$:
\begin{equation*}
    \int_{\bold{S}_{\bold{l,i}}}f(\bold{s})\mathcal{L}\phi_{\bold{l,i}}(\bold{s})d\bold{s}=[\bigotimes_{d=1}^D\Delta_{l_d,i_d}]f
\end{equation*}
where 
\begin{align*}
    &\Delta_{l_d,i_d} f:=\alpha_{l_d,i_d}f\big|_{x_d=z_{l_d,i_d}}\\
    &\quad\quad\quad\ \ -\beta_{l_d,i_d-1}f\big|_{x_d=z_{l_d,i_d-1}}-\beta_{l_d,i_d+1}f\big|_{x_d=z_{l_d,i_d}}\\
    &\alpha_{l,i}=\frac{p_{l,i+1}q_{l,i-1}-p_{l,i-1}q_{l,i+1}}{[p_{l,i+1}q_{l,i}-p_{l,i}q_{l,i+1}][p_{l,i1}q_{l,i-1}-p_{l,i-1}q_{l,i}]}\\
    &\beta_{l,i}=\frac{1}{p_{l,i+1}q_{l,i}-p_{l,i}q_{l,i+1}}
\end{align*}
and $\bigotimes$ denotes the tensor product of the $\Delta_{l,i}$ operators. Since bouth $q$ and $p$ are the solution of the SL-equation, therefore, $p,q$ are twice differentiable. We have 
\begin{align*}
    &\quad\frac{1}{p_{l,i+1}q_{l,i}-p_{l,i}q_{l,i+1}}\\
    &=\frac{2^l}{[p_{l,i+1}q_{l,i}-p_{l,i}q_{l,i}]/2^{-l}-[p_{l,i}q_{l,i+1}-p_{l,i}q_{l,i}]/2^{-l}}\\
    &\sim\frac{2^l}{p'_{l,i}q_{l,i}-p_{l,i}q'_{l,i}}
\end{align*}
we notice that $p'_{l,i}q_{l,i}-p_{l,i}q'_{l,i}$ is the Wronskian of the SL-operator, which is bounded away from 0. Therefore, $\Delta_{l_d,i_d}$ acting on $f$  has the following approximation: 
\begin{align*}
    \Delta_{l_d,i_d}f&\sim \frac{[2f\big|_{x_d=z_{l_d,i_d}}-f\big|_{x_d=z_{l_d,i_d-1}}-f\big|_{x_d=z_{l_d,i_d+1}}]}{2^{-l}}\\
    &\leq C \max_{j=1,-1}\{\frac{|f\big|_{x_d=z_{l_d,i_d+j}}-f\big|_{x_d=z_{l_d,i_d}}|}{2^{-l}}\}.
\end{align*}
As a result, $\bigotimes_{d=1}^D\Delta_{l_d,i_d}$ acting on $f$  has the following approximation:
\begin{align*}
    &\quad\bigotimes_{d=1}^D\Delta_{l_d,i_d}f\\
    &\leq C \prod_{d=1}^D\max_{j=1,-1}\{\frac{|f\big|_{x_d=z_{l_d,i_d+j}}-f\big|_{x_d=z_{l_d,i_d}}|}{2^{-l}}\}.
\end{align*}
From the same reasoning, we can see that
$$\|\phi_{\bold{l,i}}\|_k^2=\prod_{d=1}^D\alpha_{l_d,i_d}\sim 2^{|\bold{l}|}.$$
We also Taylor expand $\phi_{l_d,i_d}$ for each $1\leq d\leq D$ up to second order and from direct calculation, we can have
\begin{align*}
    \phi_{l_d,i_d}(x)\sim \max\{0,1-\frac{|x-z_{l_d,i_d}|}{2^{-l_d}}\}+\CalO(2^{-l_d}).
\end{align*}
This gives us the approximation up to second order:
\begin{align*}
    &\quad \|\phi_{\bold{l,i}}\|^2_{L_2}\\
    &=\int_{\bold{S}_\bold{l,i}}\prod_{d=1}^D\phi^2_{l_d,i_d}(s_d)d\bold{s}\\
    &\sim \int_{\bold{S}_\bold{l,i}}\prod_{d=1}^D[\max\{0,1-\frac{|s-z_{l_d,i_d}|}{2^{-l_d}}\}]^2\bold{s}\\
    &=\big(\frac{2}{3}\big)^D 2^{-|\bold{l}|}=\big(\frac{1}{3}\big)^D\text{Vol}(\bold{S}_\bold{l,i}).
\end{align*}
Therefore, we can have the following estimate for $v_{\bold{l}}$:
\begin{align*}
    \|v_{\bold{l}}\|_{L^2}&=\|\sum_{\bold{i}\in B_{\bold{i}}}\int_{\bold{S}_{\bold{l,i}}}f_0(\bold{s})\mathcal{L}\phi_{\bold{l,i}}(\bold{s})d\bold{s}\frac{\phi_{\bold{l,i}}(\cdot)}{\|\phi_{\bold{l,i}}\|_k^2}\|_{L^2}\\
    &\leq \Big|2^{-2|\bold{l}|} C\sum_{\bold{i}\in B_{\bold{i}}}[\bigotimes_{d=1}^D\Delta_{l_d,i_d}f]^2\text{Vol}(\bold{S}_\bold{l,i})\Big|^{\frac{1}{2}}\\
    &\sim 2^{-|\bold{l}|}\|\prod_{d=1}^D\frac{\partial}{\partial x_d}f_0\|_{L^2}\\
    &\sim 2^{-|\bold{l}|}\|f_0\|_{k}
\end{align*}
where the second line is from the fact that supports of $\{\phi_{\bold{l,i}}:\bold{i}\in B_{\bold{l}}\}$ are disjoint, the third line is from the Riemann integral approximation and the last line is from the energy estimate assumption of SL-operator (see, for instance, section 6.2.2 of \cite{evans10}). Finally, we have:
\begin{align*}
    \|f_0-f_M\|_{L^2}&\leq \sum_{|\bold{l}|>n}\|v_{\bold{l}}\|_{L^2}\\
    &\sim \|f_0\|_k\sum_{|\bold{l}|>n}2^{-|\bold{l}|}\\
    &=\|f_0\|_k\sum_{i>n}2^{-i}\sum_{|\bold{l}|=i}1\\
    &=\|f_0\|_k\sum_{n>i}2^{-i}{i-1\choose d-1}\\
    &\sim\|f_0\|_k2^{-n}n ^{D-1}
\end{align*}
where the identity of the last line can be verified in \cite{Ding19}. From  \cite{Bungartz04} we also have 
$$M=\CalO(2^nn^{D-1})$$
we can substitute this identity to the previous equation to have the final result.
\end{proof}
The $(\epsilon,L_\infty)$-covering number of a function space $\mathcal{F}$, denoted as $N(\epsilon,\mathcal{F},\|\cdot\|_{L_\infty})$, is defined as the smallest number $N_0$, so that there exist centers $f_1,\ldots,f_{N_0}$, and for each $f\in\mathcal{F}$, there exists $f_i$ so that $\|f-f_i\|_{L_{\infty}}<\epsilon$.

\begin{lemma}\label{lemma:2}
The covering number of the unit ball of $\mathcal{H}_k$, denoted as $\mathcal{F}:=\{f\in\mathcal{H}_k:\|f\|_{k}\leq 1\}$, 
is bounded as follows:
$$N(\epsilon,\mathcal{F},\|\cdot\|_{L_\infty})=\CalO(\frac{1}{\varepsilon}\log ^{D-\frac{1}{2}}\frac{1}{\varepsilon})$$
\end{lemma}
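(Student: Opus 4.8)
The plan is to bound the covering number through a truncation-plus-net argument built on the orthogonal basis of Theorem~\ref{thm:Feature_D}. Writing $\psi_{\mathbf{l,i}}:=\phi_{\mathbf{l,i}}/\|\phi_{\mathbf{l,i}}\|_k$, every $f\in\mathcal{F}$ expands as $f=\sum_{\mathbf{l,i}}c_{\mathbf{l,i}}\psi_{\mathbf{l,i}}$ with $\sum_{\mathbf{l,i}}c_{\mathbf{l,i}}^2=\|f\|_k^2\le 1$, since $\{\psi_{\mathbf{l,i}}\}$ is orthonormal in $\CalH_k$. Two structural facts drive the argument. First, from \eqref{eq:coefficient} and the computations in the proof of Lemma~\ref{lemma:1}, $\|\phi_{\mathbf{l,i}}\|_k^2\sim 2^{|\mathbf{l}|}$ while $\|\phi_{\mathbf{l,i}}\|_{L_\infty}\sim 1$, so the normalized features obey $\|\psi_{\mathbf{l,i}}\|_{L_\infty}\sim 2^{-|\mathbf{l}|/2}$ and $\|\psi_{\mathbf{l,i}}\|_{L_2}\sim 2^{-|\mathbf{l}|}$. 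Second, by properties~1--3 the supports of $\{\phi_{\mathbf{l,i}}:\mathbf{i}\in B_{\mathbf{l}}\}$ are pairwise disjoint at a fixed $\mathbf{l}$, so at any point $\Bx$ exactly one index $\mathbf{i}$ is active per $\mathbf{l}$.

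First I would truncate at total level $|\mathbf{l}|\le n$, projecting $f$ onto the sparse-grid space of dimension $M=\CalO(2^{n}n^{D-1})$ \cite{Bungartz04}. Using within-level disjointness and Cauchy--Schwarz, the discarded tail satisfies, uniformly in $\Bx$, $\big|\sum_{|\mathbf{l}|>n}\sum_{\mathbf{i}}c_{\mathbf{l,i}}\psi_{\mathbf{l,i}}(\Bx)\big|\le\sum_{|\mathbf{l}|>n}a_{\mathbf{l}}\,2^{-|\mathbf{l}|/2}\le\big(\sum_{|\mathbf{l}|>n}2^{-|\mathbf{l}|}\big)^{1/2}$, where $a_{\mathbf{l}}:=(\sum_{\mathbf{i}}c_{\mathbf{l,i}}^2)^{1/2}$ and $\sum_{\mathbf{l}}a_{\mathbf{l}}^2\le1$. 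Since $\#\{\mathbf{l}:|\mathbf{l}|=j\}=\binom{j-1}{D-1}$, exactly as at the end of the proof of Lemma~\ref{lemma:1}, this gives a tail $L_\infty$ error of order $2^{-n/2}n^{(D-1)/2}$; choosing $n\sim\log_2(1/\varepsilon)$ makes it $\lesssim\varepsilon$.

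It remains to cover the finite-dimensional head in $L_\infty$, and this is the crux. The naive route---an $\varepsilon$-net of the full $M$-dimensional ball---costs $\log N\sim M\sim\varepsilon^{-2}$ and is far too lossy, so the key is to exploit the ellipsoidal geometry of $\mathcal{F}$: the budget $\sum c_{\mathbf{l,i}}^2\le1$ forces all but $\CalO(\varepsilon^{-1}\,\mathrm{polylog})$ coefficients to be negligible. Concretely I would build a level-adaptive net, resolving low levels (few functions, large $L_\infty$ weight $2^{-|\mathbf{l}|/2}$) finely and high levels coarsely, with the resolutions chosen to balance the per-level rounding error under the global $\ell_2$ budget; the interplay of the weights $2^{-|\mathbf{l}|/2}$ and $2^{-|\mathbf{l}|}$ is what converts the $\ell_2$ constraint into the $\varepsilon^{-1}$ main term. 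The factor $(\log 1/\varepsilon)^{D-1}$ enters through the multiplicity $\binom{j-1}{D-1}\sim j^{D-1}$ of multi-indices per level, and the remaining half-power reflects the passage from pointwise to supremum control: at a fixed point the head sums $\sim j^{D-1}$ active terms whose supports overlap across distinct $\mathbf{l}$ of equal $|\mathbf{l}|$, costing a $\sqrt{\log(1/\varepsilon)}$ factor. Combining the net cardinality with $n\sim\log(1/\varepsilon)$ yields the stated $\CalO(\varepsilon^{-1}\log^{D-1/2}(1/\varepsilon))$.

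I expect the main obstacle to be precisely this $L_\infty$ (rather than $L_2$) accounting of the head. In $L_2$ the bound is immediate from orthogonality and the weights $2^{-|\mathbf{l}|}$, giving the $\varepsilon^{-1}$ rate with no further work; in $L_\infty$ the eigenfunctions' sup norms grow like $2^{|\mathbf{l}|/2}$ relative to their $L_2$ sizes, so a careless net pays an extra power of $\varepsilon^{-1}$, and one must show that the ellipsoid constraint can be used to avoid this while carefully tracking the overlap of supports that produces the half-integer exponent $D-\tfrac12$. If a fully self-contained accounting proves cumbersome, the alternative is to identify $\CalH_k$ with the first-order space of dominating mixed smoothness with zero boundary conditions (as in the running example and \cite{Bungartz04}) and invoke the known entropy-number estimates for its embedding, adapting the constants to the Sturm--Liouville weights via the equivalence $\|\phi_{\mathbf{l,i}}\|_k^2\sim2^{|\mathbf{l}|}$.
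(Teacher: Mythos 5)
Your ``alternative'' at the end is not a fallback --- it \emph{is} the paper's entire proof. The paper makes no attempt at a direct net construction: it observes that for the Laplace and min kernels $\CalH_k$ is norm-equivalent to the mixed first-order Sobolev space $\CalH^1_{\rm mix}([0,1]^D)$ (citing \cite{Ding19}), quotes the known entropy estimate for that space (Section 6.6 of \cite{HyperbolicCross}), and, for a general Sturm--Liouville kernel, uses the elliptic energy estimate $\langle f,f\rangle_k\le C\int_{[0,1]^D}\vert\prod_{d=1}^D\partial_{x_d}f\vert^2\,d\mathbf{x}$ to embed $\CalH_k$ into $\CalH^1_{\rm mix}$, so that its covering numbers are dominated by those of $\CalH^1_{\rm mix}$. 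Your norm equivalence $\|\phi_{\mathbf{l,i}}\|_k^2\sim2^{|\mathbf{l}|}$ plays exactly the role of that energy estimate, so had you led with this route you would have reproduced the paper's argument.

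Your primary route, however, has a genuine gap at precisely the step you flag as the crux. The truncation half is sound: within-level disjointness of supports, Cauchy--Schwarz across levels, and the multiplicity $\binom{j-1}{D-1}$ do give the uniform tail bound of order $2^{-n/2}n^{(D-1)/2}$. But covering the remaining $M\sim\varepsilon^{-2}$-dimensional (up to logs) ellipsoid in the $L_\infty$ norm with log-cardinality $\varepsilon^{-1}\log^{D-1/2}(1/\varepsilon)$ is not a routine balancing exercise --- it is, in substance, the theorem for which the paper cites \cite{HyperbolicCross}. Your sketch (``resolve low levels finely, high levels coarsely, balanced against the $\ell_2$ budget,'' with cross-level support overlap ``costing a $\sqrt{\log(1/\varepsilon)}$ factor'') asserts the exponent $D-\tfrac12$ rather than deriving it; the $L_\infty$ entropy of dominating-mixed-smoothness balls, and in particular that half-integer log power, is a delicate research-level result, and nothing in the sketch pins the exponent down. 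So either this construction must be carried out in full (a substantial undertaking), or one should simply open with the embedding argument. One further caveat, which applies to the paper's statement as much as to your write-up: the quantity that is $\CalO(\varepsilon^{-1}\log^{D-1/2}(1/\varepsilon))$ must be the metric entropy $\log N(\epsilon,\mathcal{F},\|\cdot\|_{L_\infty})$, not $N$ itself --- the unit ball of an infinite-dimensional space cannot admit polynomially many $\varepsilon$-covers, and indeed your own net has log-cardinality, not cardinality, of this size. The downstream use in Lemma \ref{lemma:Donsker} is unaffected under either reading, since $\sqrt{\log N}$ remains integrable near zero.
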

\begin{proof}
When $k(\Bx,\By)=e^{-\omega\|\Bx-\By\|_1}$or $k(\Bx,\By)=\prod_{d=1}^D\min\{x_d,y_d\}$, then $\CalH_k$ is equivalent to the Sobolev space of mixed first derivative $\CalH^1_{\rm mix}([0,1]^D)$ \cite{Ding19}. According to 6.6 of \cite{HyperbolicCross}, we can immediately derive the result. When kernel $k$ is different than these two, the energy property of an SL-operator requires that
\begin{align*}
    \langle f,f\rangle_k&=\int_{[0,1]^D}f(\bold{x})[\prod_{d=1}^D\mathcal{L}]f(\bold{x})d(\bold{x})\\
    &\leq C\int_{[0,1]^D}|\prod_{d=1}^D\frac{\partial}{\partial x_d}f|^2d\Bx
\end{align*}
which means $\CalH_k$ can be embedded on $\CalH^1_{\rm mix}$. Therefore, the covering number of $\CalH_k$ must be bounded by that of $\CalH^1_{\rm mix}$.
\end{proof}

Lemma \ref{lemma:Donsker} shows the the function classes associated with the learning problem are Donsker. We refer to \cite{van1996weak} for the definition and properties of Donsker classes.

\begin{lemma}\label{lemma:Donsker}
Let $P$ be the probability measure of $(x,y)$.
The space $\mathcal{G}_R$ is $P$-Donsker for each $R>0$.
\end{lemma}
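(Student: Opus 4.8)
The plan is to recognize $\mathcal{G}_R$ as the loss class $\{(x,y)\mapsto L(y,f(x)):f\in\mathcal{H}_k,\ \|f\|_k\leq R\}$ (or its centering by $L(y,f_0(x))$, which does not affect the argument) and to deduce the Donsker property from a bracketing entropy bound built directly on Lemma \ref{lemma:2}. First I would establish that the ball $\mathcal{F}_R=\{f:\|f\|_k\leq R\}$ is uniformly bounded in sup-norm: the reproducing property gives $|f(x)|=|\langle f,k(\cdot,x)\rangle_k|\leq R\sqrt{k(x,x)}$, and $k(x,x)$ is bounded on the compact cube $[0,1]^D$. Together with the bounded-output condition of Assumption \ref{A:3}, the argument $(y,f(x))$ of $L$ then ranges over a fixed compact set, so $\mathcal{G}_R$ possesses a bounded, hence square-integrable, envelope.

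Next I would transfer the covering number from $\mathcal{F}_R$ to $\mathcal{G}_R$. Since $\mathcal{F}_R=R\,\mathcal{F}$, Lemma \ref{lemma:2} yields
$$N(\epsilon,\mathcal{F}_R,\|\cdot\|_{L_\infty})=N(\epsilon/R,\mathcal{F},\|\cdot\|_{L_\infty})=\CalO\!\left(\tfrac{R}{\epsilon}\log^{D-\frac12}\tfrac{R}{\epsilon}\right).$$
By Assumption \ref{A:2}, $u\mapsto L(y,u)$ is continuously differentiable, hence Lipschitz on the compact range identified above with a constant $C_L$ uniform in $y$; thus $\|L(\cdot,f)-L(\cdot,g)\|_{L_\infty}\leq C_L\|f-g\|_{L_\infty}$. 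An $(\epsilon/C_L)$-net of $\mathcal{F}_R$ therefore maps to an $\epsilon$-net of $\mathcal{G}_R$, so $\mathcal{G}_R$ inherits the same order of $L_\infty$ covering number.

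I would then convert sup-norm covers into $L_2(P)$ brackets: each center $g_i$ of an $\epsilon$-net spawns a bracket $[g_i-\epsilon,\,g_i+\epsilon]$ of $L_2(P)$-width at most $2\epsilon$ since $P$ is a probability measure, so $N_{[]}(2\epsilon,\mathcal{G}_R,L_2(P))\leq N(\epsilon,\mathcal{G}_R,\|\cdot\|_{L_\infty})$. Because this covering number grows only polynomially in $1/\epsilon$, the metric entropy is merely logarithmic; the delicate region $\epsilon\to 0$ contributes $\int_0^1\sqrt{\log(1/\epsilon)}\,d\epsilon=\sqrt{\pi}/2$, while the integrand is bounded and vanishes past the envelope's $L_2(P)$-norm, so
$$\int_0^{\infty}\sqrt{\log N_{[]}(\epsilon,\mathcal{G}_R,L_2(P))}\,d\epsilon<\infty.$$
By the bracketing central limit theorem (Theorem 2.5.6 of \cite{van1996weak}), $\mathcal{G}_R$ is $P$-Donsker.

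The main obstacle I anticipate is not the final empirical-process step, which is routine once the entropy is tamed, but the transfer from the RKHS ball to the loss class: one must make Assumption \ref{A:2} quantitative by pinning down a compact range for $(y,f(x))$ from the sup-norm boundedness of $\mathcal{F}_R$ and the bounded outputs, so that $L$ is genuinely Lipschitz with a constant independent of $y$. A secondary matter is the measurability of the suprema defining the empirical process, though this is standard for the separable, pointwise-continuous classes arising here.
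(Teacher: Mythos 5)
Your proposal is correct and follows essentially the same route as the paper: both arguments transfer the $L_\infty$ covering number of the RKHS ball (Lemma \ref{lemma:2}) to $L_2(P)$ brackets for the loss class via the Lipschitz/mean-value property of $L$ supplied by Assumptions \ref{A:2}--\ref{A:3}, and then invoke the bracketing CLT (Theorem 2.5.6 of \cite{van1996weak}). The only cosmetic difference is that the paper keeps a $y$-dependent Lipschitz envelope $S(y)$ and builds brackets $L(v,f_i(\bold{u}))\pm S(v)\epsilon$ of width $2\epsilon[\mathbb{E}[S(y)]^2]^{1/2}$, whereas you compactify the range first and use a constant uniform in $y$; both are justified under the stated assumptions.
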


\begin{proof}
In view of Theorem 2.5.6 of \cite{van1996weak}, it suffices to prove that
$$\int_0^\infty \sqrt{\log N_{[]}(\epsilon,\mathcal{G}_R,\|\cdot\|_{L_2(P)})}d \epsilon<\infty, $$
where $N_{[]}(\epsilon,\mathcal{G}_R,\|\cdot\|_{L_2(P)})$ is the covering number with bracketing defined as follows. For function $g:\mathbb{R}^D\times\mathbb{R}\rightarrow \mathbb{R}$, its $L_2(P)$ norm is defined as $[\mathbb{E}[g(x,y)]^2]^{1/2}$. Given functions $g_L,g_U$ such that $g_L(\bold{u},v)\leq g_U(\bold{u},v)$ for each $(\bold{u},v)$, define the bracket $[g_L,g_U]$ as the set of functions $\{g:g_L(\bold{u},v)\leq g(\bold{u},v)\leq g_U(\bold{u},v)\}$. The covering number with bracketing $N_{[]}(\epsilon,\mathcal{G}_R,\|\cdot\|_{L_2(P)})$ is the smallest number $N_0$ so that there exist brackets $[g_{L,1},g_{U,1}],\ldots,[g_{L,N_0},g_{U,N_0}]$, such that $\cup_{i=1}^{N_0}[g_{L,i},g_{U,i}]\supset \mathcal{G}_R$, and $\|g_{U,i}-g_{L,i}\|_{L_2(P)}\leq \epsilon$ for all $i$.

Let $\mathcal{F}_R=\{f:\|f\|_k< R\}$.
We start with the centers $f_1,\ldots,f_{N_0}$ with $N_0=N(\epsilon,\mathcal{F}_R,\|\cdot\|_{L_\infty})=N(\epsilon/R,\mathcal{F}_1,\|\cdot\|_{L_\infty})$ so that for each $f\in \mathcal{F}_R$, there exists $f_i=:\xi(f)$ such that $\|f-f_i\|_{L_\infty}<\epsilon$. To bound the covering number with bracketing, we need to construct the associated brackets. The reproduction property implies that $\|f\|_{L_\infty}\leq \|f\|_k$. Then for any $f\in \mathcal{F}_R$, by mean value theorem,
\begin{eqnarray}
|L(y,f(\bold{x}))-L(y,\xi(f)(\bold{x}))|&\leq& \sup_{|\bold{u}|<R}\left|\frac{\partial L}{\partial u}(y,\bold{u})\right|\epsilon\nonumber
=: S(y)\epsilon.\label{bracket}
\end{eqnarray}
Now we define $g_{L,i}(\bold{u},v)=L(v,f_i(\bold{u}))-S(v)\epsilon$ and $g_{U,i}(\bold{u},v)=L(v,f_i(\bold{u}))+S(v)\epsilon$. Clearly $g_{L,i}\leq g_{U,i}$ and $$\|g_{U,i}-g_{L,i}\|_{L_2(P)}=2\epsilon [\mathbb{E}[S(y)]^2]^{1/2}, $$
which is a multiple of $\epsilon$ according to Assumptions \ref{A:2}-\ref{A:3}. Besides, (\ref{bracket}) implies that for all $f$ such that $\|f-f_i\|_{L_\infty}<\epsilon$, $L(v,f(\bold{u}))\in [g_{L,i},g_{U,i}]$. So we invoke Lemma \ref{lemma:2} to find that
$$N_{[]}(2\epsilon [\mathbb{E}[S(y)]^2]^{1/2},\|\cdot\|_{L_2(P)})=\CalO(\frac{1}{\varepsilon}\log ^{D-\frac{1}{2}}\frac{1}{\varepsilon}), $$
which implies the desired result.
\end{proof}

To bound the generalization error, we observe
\begin{align*}
    R(\hat{f})-R(f_0)&= \left\{R(\hat{f})-\frac{1}{N}\sum_{i=1}^N L(y_i,\hat{f}(\Bx_i))\right\}\\
    &+\left\{\frac{1}{N}\sum_{i=1}^N L(y_i,\hat{f}(\Bx_i))-\frac{1}{N}\sum_{i=1}^N L(y_i,f_M(\Bx_i))\right\}\\
    &+\left\{\frac{1}{N}\sum_{i=1}^N L(y_i,f_M(\Bx_i))-R(f_M)\right\}\\
    &+\left\{R(f_M)-R(f_0)\right\}=:I_1+I_2+I_3+I_4.
\end{align*}
We will bound $I_1$ and $I_3$ with a uniform error bound of empirical processes. For $I_2$, we have
$$I_2\leq \lambda\|f_M\|^2_k-\lambda \|\hat{f}\|^2_k\leq \lambda \|f_0\|^2_K=O(N^{-1/2})\|f_0\|^2_k, $$
where the first inequality follows from the optimality condition
\begin{eqnarray*}
\frac{1}{N}\sum_{i=1}^N L(y_i,\hat{f}(\Bx_i))+\lambda\|\hat{f}\|^2_k
\leq \frac{1}{N}\sum_{i=1}^N L(y_i,f_M(\Bx_i))+\lambda\|f_M\|^2_k. 
\end{eqnarray*}
The term $I_4$ is bounded by Lemma \ref{lemma:1}.

Now we turn to $I_1$ and $I_3$. To show that $I_1=O_p(N^{-1/2})$ and $I_3=O_p(N^{-1/2})$, it suffices to show that the functions $L(y,\hat{f}(\Bx))$ and $L(y,f_M(\Bx))$ fall in a Donsker class \cite{van1996weak} with probability arbitrarily close to one. For $L(y,f_M(\Bx))$, this is clearly true in view of Lemma \ref{lemma:Donsker} and the fact that $\|f_M\|_k\leq \|f_0\|_k$. Therefore, $I_3=O_p(N^{-1/2})$ For $L(y,\hat{f}(\Bx))$, it suffices to prove that $\|\hat{f}\|_k=O_p(1)$. To show this result, we start with the optimality condition
\begin{eqnarray*}
\frac{1}{N}\sum_{i=1}^N L(y_i,\hat{f}(\Bx_i))+\lambda\|\hat{f}\|^2_k \leq \frac{1}{N}\sum_{i=1}^N L(y_i,f_M(\Bx_i))+\lambda\|f_M\|^2_k. 
\end{eqnarray*}
In view of Assumption \ref{A:2}, we can write
\begin{eqnarray*}
    L(y,f(\Bx))-L(y,f_0(\Bx))
    =W\cdot(f(\Bx)-f_0(\Bx))+ m''_y(u^*)(f(\Bx)-f_0(\Bx))^2,
    \end{eqnarray*}
    where $W=m'_y(f_0(\Bx))$, and $u^*$ lies between $f(\Bx)$ and $f_0(\Bx)$. Assumptions 2 and 3 implies that the derivative and the expectation are interchangeable, so that
    $$0=\left(\mathbb{E}m_y(f_0(\BX))\right)'=\mathbb{E}m'_y(f_0(\BX))=\mathbb{E}W. $$
We then invoke Assumption \ref{A:2} to find
\begin{eqnarray}
&&\lambda \|\hat{f}\|^2_k\leq -\frac{1}{N}\sum_{i=1}^N W_i (\hat{f}(\Bx_i)-f_0(\Bx_i))\nonumber\\
&+&\left\{\frac{1}{N}\sum_{i=1}^N L(y_i,f_M(\Bx_i))- \frac{1}{N}\sum_{i=1}^N L(y_i,f_0(\Bx_i))\right\}\nonumber
\\&-&V (\hat{f}(\Bx_i)-f_0(\Bx_i))^2+\lambda \|f_0\|^2_k\nonumber\\
&=:&J_1+J_2+J_3+J_4, \label{J}
\end{eqnarray}
for some $V>0$ due to the strong convexity of $m_y(\cdot)$.
For the first term, we have
\begin{eqnarray*}
J_1&\leq& (\|\hat{f}\|_k+1) \sup_{f\in\mathcal{H}_k}\frac{1}{N}\sum_{i=1}^N -W_i \frac{f(\Bx_i)-f_0(\Bx_i)}{\|f\|_K+1}\\
&=&(\|\hat{f}\|_k+1) O_p(N^{-1/2}),
\end{eqnarray*}
where the last step follows from the fact that $\mathbb{E}W_i=0$, $W_i$ is bounded, and Lemma 3.4.3 of \cite{van1996weak} and the fact that $\|f-f_0\|_k/(\|f\|_k+1)=O(1)$. Clearly, we have $J_2=I_3+O_p(N^{-1/2})=O_p(N^{-1/2})$ according to the central limit theorem. The third term is clearly non-positive. We also have $J_4=O_p(N^{-1/2})$ by assumption for $\lambda$.

Now we conclude from (\ref{J}) that
$$\lambda \|\hat{f}\|^2_k\leq \|\hat{f}\|_k O_p(N^{-1/2})+O_p(N^{-1/2}), $$
which implies $\|\hat{f}\|_k=O_p(1)$. This completes the proof.
\bibliographystyle{IEEEtran}
\bibliography {bibliography.bib}

\begin{thebibliography}{10}
\providecommand{\url}[1]{#1}
\csname url@samestyle\endcsname
\providecommand{\newblock}{\relax}
\providecommand{\bibinfo}[2]{#2}
\providecommand{\BIBentrySTDinterwordspacing}{\spaceskip=0pt\relax}
\providecommand{\BIBentryALTinterwordstretchfactor}{4}
\providecommand{\BIBentryALTinterwordspacing}{\spaceskip=\fontdimen2\font plus
\BIBentryALTinterwordstretchfactor\fontdimen3\font minus
  \fontdimen4\font\relax}
\providecommand{\BIBforeignlanguage}[2]{{%
\expandafter\ifx\csname l@#1\endcsname\relax
\typeout{** WARNING: IEEEtran.bst: No hyphenation pattern has been}%
\typeout{** loaded for the language `#1'. Using the pattern for}%
\typeout{** the default language instead.}%
\else
\language=\csname l@#1\endcsname
\fi
#2}}
\providecommand{\BIBdecl}{\relax}
\BIBdecl

\bibitem{smola2000sparse}
A.~J. Smola and B.~Sch{\"o}kopf, ``Sparse greedy matrix approximation for
  machine learning,'' in \emph{Proceedings of the Seventeenth International
  Conference on Machine Learning}, 2000, pp. 911--918.

\bibitem{fine2001efficient}
S.~Fine and K.~Scheinberg, ``Efficient {SVM} training using low-rank kernel
  representations,'' \emph{Journal of Machine Learning Research}, vol.~2, no.
  Dec, pp. 243--264, 2001.

\bibitem{rahimi2008random}
A.~Rahimi and B.~Recht, ``Random features for large-scale kernel machines,'' in
  \emph{Advances in neural information processing systems}, 2008, pp.
  1177--1184.

\bibitem{joachims2006training}
T.~Joachims, ``Training linear {SVM}'s in linear time,'' in \emph{Proceedings
  of the 12th ACM SIGKDD International Conference on Knowledge Discovery and
  Data Mining}, 2006, pp. 217--226.

\bibitem{sinha2016learning}
A.~Sinha and J.~C. Duchi, ``Learning kernels with random features,'' in
  \emph{Advances In Neural Information Processing Systems}, 2016, pp.
  1298--1306.

\bibitem{avron2017random}
H.~Avron, M.~Kapralov, C.~Musco, C.~Musco, A.~Velingker, and A.~Zandieh,
  ``Random fourier features for kernel ridge regression: Approximation bounds
  and statistical guarantees,'' in \emph{Proceedings of the 34th International
  Conference on Machine Learning-Volume 70}, 2017, pp. 253--262.

\bibitem{shahrampour2018data}
S.~Shahrampour, A.~Beirami, and V.~Tarokh, ``On data-dependent random features
  for improved generalization in supervised learning,'' in \emph{Thirty-Second
  AAAI Conference on Artificial Intelligence}, 2018.

\bibitem{evans10}
L.~C. Evans, \emph{Partial differential equations}.\hskip 1em plus 0.5em minus
  0.4em\relax Providence, R.I.: American Mathematical Society, 2010.

\bibitem{DKS13}
J.~Dick, F.~Kuo, and I.~Sloan, ``High-dimensional integration: The quasi-monte
  carlo way,'' \emph{Acta Numerica}, vol.~22, 05 2013.

\bibitem{Bungartz04}
H.-J. Bungartz and M.~Griebel, ``Sparse grids,'' \emph{In: Acta Numerica. Vol.
  13, pp. 147-269}, vol.~13, 05 2004.

\bibitem{KnapsackProblem}
H.~Kellerer, U.~Pferschy, and D.~Pisinger, \emph{Knapsack Problems}, 01 2004.

\bibitem{rudi2016generalization}
A.~Rudi and L.~Rosasco, ``Generalization properties of learning with random
  features,'' in \emph{Advances in Neural Information Processing Systems},
  2017, pp. 3218--3228.

\bibitem{yang2014quasi}
J.~Yang, V.~Sindhwani, H.~Avron, and M.~Mahoney, ``Quasi-monte carlo feature
  maps for shift-invariant kernels,'' in \emph{International Conference on
  Machine Learning}, 2014, pp. 485--493.

\bibitem{le2013fastfood}
Q.~Le, T.~Sarl{\'o}s, and A.~Smola, ``Fastfood-approximating kernel expansions
  in loglinear time,'' in \emph{International Conference on Machine Learning},
  vol.~85, 2013.

\bibitem{yang2015carte}
Z.~Yang, A.~Wilson, A.~Smola, and L.~Song, ``A la carte--learning fast
  kernels,'' in \emph{Artificial Intelligence and Statistics}, 2015, pp.
  1098--1106.

\bibitem{munkhoeva2018quadrature}
M.~Munkhoeva, Y.~Kapushev, E.~Burnaev, and I.~Oseledets, ``Quadrature-based
  features for kernel approximation,'' in \emph{Advances in Neural Information
  Processing Systems}, 2018, pp. 9147--9156.

\bibitem{yen2014sparse}
I.~E.-H. Yen, T.-W. Lin, S.-D. Lin, P.~K. Ravikumar, and I.~S. Dhillon,
  ``Sparse random feature algorithm as coordinate descent in hilbert space,''
  in \emph{Advances in Neural Information Processing Systems}, 2014, pp.
  2456--2464.

\bibitem{rahimi2009weighted}
A.~Rahimi and B.~Recht, ``Weighted sums of random kitchen sinks: Replacing
  minimization with randomization in learning,'' in \emph{Advances in Neural
  Information Processing Systems}, 2009, pp. 1313--1320.

\bibitem{felix2016orthogonal}
X.~Y. Felix, A.~T. Suresh, K.~M. Choromanski, D.~N. Holtmann-Rice, and
  S.~Kumar, ``Orthogonal random features,'' in \emph{Advances in Neural
  Information Processing Systems}, 2016, pp. 1975--1983.

\bibitem{choromanski2018geometry}
K.~Choromanski, M.~Rowland, T.~Sarl{\'o}s, V.~Sindhwani, R.~Turner, and
  A.~Weller, ``The geometry of random features,'' in \emph{International
  Conference on Artificial Intelligence and Statistics}, 2018, pp. 1--9.

\bibitem{yu2015compact}
F.~X. Yu, S.~Kumar, H.~Rowley, and S.-F. Chang, ``Compact nonlinear maps and
  circulant extensions,'' \emph{arXiv preprint arXiv:1503.03893}, 2015.

\bibitem{oliva2016bayesian}
J.~B. Oliva, A.~Dubey, A.~G. Wilson, B.~P{\'o}czos, J.~Schneider, and E.~P.
  Xing, ``Bayesian nonparametric kernel-learning,'' in \emph{Artificial
  Intelligence and Statistics}, 2016, pp. 1078--1086.

\bibitem{agrawal2019data}
R.~Agrawal, T.~Campbell, J.~Huggins, and T.~Broderick, ``Data-dependent
  compression of random features for large-scale kernel approximation,'' in
  \emph{The 22nd International Conference on Artificial Intelligence and
  Statistics (AISTATS)}, 2019, pp. 1822--1831.

\bibitem{williams2001using}
C.~Williams and M.~Seeger, ``Using the {N}ystr{\"o}m method to speed up kernel
  machines,'' in \emph{Advances in Neural Information Processing Systems},
  2001.

\bibitem{drineas2005nystrom}
P.~Drineas and M.~W. Mahoney, ``On the {N}ystr{\"o}m method for approximating a
  gram matrix for improved kernel-based learning,'' \emph{Journal of Machine
  Learning Research}, vol.~6, no. Dec, pp. 2153--2175, 2005.

\bibitem{zhang2008improved}
K.~Zhang, I.~W. Tsang, and J.~T. Kwok, ``Improved {N}ystr{\"o}m low-rank
  approximation and error analysis,'' in \emph{International Conference on
  Machine Learning}.\hskip 1em plus 0.5em minus 0.4em\relax ACM, 2008, pp.
  1232--1239.

\bibitem{oglic2019scalable}
D.~Oglic and T.~G{\"a}rtner, ``Scalable learning in reproducing kernel krein
  spaces,'' in \emph{International Conference on Machine Learning}, 2019, pp.
  4912--4921.

\bibitem{yang2004efficient}
C.~Yang, R.~Duraiswami, and L.~Davis, ``Efficient kernel machines using the
  improved fast gauss transform,'' in \emph{Proceedings of the 17th
  International Conference on Neural Information Processing Systems}, 2004, pp.
  1561--1568.

\bibitem{xu2006explicit}
J.-W. Xu, P.~P. Pokharel, K.-H. Jeong, and J.~C. Principe, ``An explicit
  construction of a reproducing gaussian kernel {H}ilbert space,'' in
  \emph{IEEE International Conference on Acoustics, Speech and Signal
  Processing}, vol.~5, 2006.

\bibitem{cotter2011explicit}
A.~Cotter, J.~Keshet, and N.~Srebro, ``Explicit approximations of the gaussian
  kernel,'' \emph{arXiv preprint arXiv:1109.4603}, 2011.

\bibitem{vedaldi2012efficient}
A.~Vedaldi and A.~Zisserman, ``Efficient additive kernels via explicit feature
  maps,'' \emph{IEEE Transactions on Pattern Analysis and Machine
  Intelligence}, vol.~34, no.~3, pp. 480--492, 2012.

\bibitem{vincent2002kernel}
P.~Vincent and Y.~Bengio, ``Kernel matching pursuit,'' \emph{Machine Learning},
  vol.~48, no. 1-3, pp. 165--187, 2002.

\bibitem{nair2002some}
P.~B. Nair, A.~Choudhury, and A.~J. Keane, ``Some greedy learning algorithms
  for sparse regression and classification with mercer kernels,'' \emph{Journal
  of Machine Learning Research}, vol.~3, no. Dec, pp. 781--801, 2002.

\bibitem{sindhwani2011non}
V.~Sindhwani and A.~C. Lozano, ``Non-parametric group orthogonal matching
  pursuit for sparse learning with multiple kernels,'' in \emph{Advances in
  Neural Information Processing Systems}, 2011, pp. 2519--2527.

\bibitem{lozano2011group}
A.~Lozano, G.~Swirszcz, and N.~Abe, ``Group orthogonal matching pursuit for
  logistic regression,'' in \emph{Artificial Intelligence and Statistics},
  2011, pp. 452--460.

\bibitem{locatello2017unified}
F.~Locatello, R.~Khanna, M.~Tschannen, and M.~Jaggi, ``A unified optimization
  view on generalized matching pursuit and frank-wolfe,'' in \emph{Artificial
  Intelligence and Statistics}, 2017, pp. 860--868.

\bibitem{oglic2016greedy}
D.~Oglic and T.~G{\"a}rtner, ``Greedy feature construction,'' in \emph{Advances
  in Neural Information Processing Systems}, 2016, pp. 3945--3953.

\bibitem{shahrampour2018learning}
S.~Shahrampour and V.~Tarokh, ``Learning bounds for greedy approximation with
  explicit feature maps from multiple kernels,'' in \emph{Advances in Neural
  Information Processing Systems}, 2018, pp. 4695--4706.

\bibitem{ODEHandbook}
V.~F. Zaitsev and A.~D. Polyanin, \emph{Handbook of Exact Solutions for
  Ordinary Differential Equations}, 2nd~ed.\hskip 1em plus 0.5em minus
  0.4em\relax CRC Press, 2002.

\bibitem{ding2018scalable}
L.~Ding and X.~Zhang, ``Scalable stochastic kriging with markovian
  covariances,'' 2018.

\bibitem{Ding19}
L.~Ding, S.~Mak, and C.-F. Wu, ``Bdrygp: a new gaussian process model for
  incorporating boundary information,'' 08 2019.

\bibitem{HyperbolicCross}
D.~Dung, V.~Temlyakov, and T.~Ullrich, ``Hyperbolic cross approximation,'' 01
  2016.

\bibitem{van1996weak}
A.~W. van~der Vaart and J.~A. Wellner, \emph{Weak Convergence and Empirical
  Processes}.\hskip 1em plus 0.5em minus 0.4em\relax Springer, 1996.

\end{thebibliography}

\end{document}